\documentclass[11pt]{article}
\usepackage[margin=1in]{geometry}
\usepackage[authoryear, sort&compress]{natbib}
\usepackage{authblk}

\usepackage{url} %this package should fix any errors with URLs in refs.
\usepackage{lineno}
\usepackage{soul}
\usepackage{microtype}
\usepackage[utf8]{inputenc} % allow utf-8 input
\usepackage[T1]{fontenc}    % use 8-bit T1 fonts
\usepackage{url}            % simple URL typesetting
\usepackage{booktabs}       % professional-quality tables
\usepackage{multirow}
\usepackage{nicefrac}       % compact symbols for 1/2, etc.
\usepackage{tikz}
\usepackage{svg}

\usepackage{array}
\usepackage{graphicx}
\usepackage{paralist}
\usepackage{float}
\usepackage[stable]{footmisc}
\usepackage{tikz}
\usepackage{tikz-cd}
\usetikzlibrary{arrows.meta}
\tikzset{>={Stealth}}
\usepackage{enumitem}
\usepackage{amsmath}
\usepackage{amssymb}
\usepackage{mathtools}
\usepackage{amsthm}
\usepackage{algorithm}
\usepackage{algorithmic}
\usepackage{amsfonts}       % blackboard math symbols
\usepackage{relsize} % table font size adjustments
\usepackage[most]{tcolorbox}
\usepackage{makecell}
\usepackage[hidelinks]{hyperref}
\usepackage{orcidlink}
\providecommand{\doi}[1]{\href{https://doi.org/#1}{doi:\nolinkurl{#1}}}

\theoremstyle{plain}
\newtheorem{theorem}{Theorem}[section]

\theoremstyle{definition}

\theoremstyle{remark}

\newcommand{\diff}{\mathrm{d}}

\newcommand{\Z}{\mathcal{Z}}
\newcommand{\TT}{\mathcal{T}}

\newcommand{\dfn}[1]{\textbf{\emph{#1}}}

\title{AeroMELD: A Linear Embedding of Aerosol Populations for Diagnostics and Latent Dynamics}

\author[1,4]{Ehsan~Saleh\thanks{Now at Amazon, Sunnyvale, California, USA; work on this paper was completed while at the University of Illinois Urbana-Champaign.}}
\author[1,4]{Saba~Ghaffari\,\orcidlink{0000-0003-0791-3905}}
\author[3]{Wenhan~Tang\,\orcidlink{0009-0009-5076-036X}}
\author[3]{Jeffrey~H.~Curtis\,\orcidlink{0000-0002-1447-2127}}
\author[5]{Lekha~Patel\,\orcidlink{0000-0003-3508-0672}}
\author[5]{Peter~A.~Bosler\,\orcidlink{0000-0002-3356-0296}}
\author[3]{Nicole~Riemer\,\orcidlink{0000-0002-3220-3457}}
\author[2]{Matthew~West\,\orcidlink{0000-0002-7605-0050}}

\affil[1]{Department of Computer Science, University of Illinois Urbana-Champaign, Urbana, Illinois, USA}
\affil[2]{Department of Mechanical Science and Engineering, University of Illinois Urbana-Champaign, Urbana, Illinois, USA}
\affil[3]{Department of Climate, Meteorology and Atmospheric Sciences, University of Illinois Urbana-Champaign, Urbana, Illinois, USA}
\affil[4]{National Center for Supercomputing Applications, University of Illinois Urbana-Champaign, Urbana, Illinois, USA}
\affil[5]{Center for Computing Research, Sandia National Laboratories, Albuquerque, New Mexico, USA}

\date{}

\begin{document}
\maketitle

\begin{abstract}
Accurately representing atmospheric aerosol populations is essential
for simulating aerosol–cloud interactions, radiative forcing, and ice
nucleation, yet existing reduced aerosol schemes impose structural
assumptions that limit their ability to capture composition diversity
and mixing state. Machine-learning approaches offer new opportunities
for flexible representations, but standard autoencoders do not
preserve the mathematical structure of aerosol populations and
therefore cannot support physically meaningful process operators. In
this paper, we introduce AeroMELD (Aerosol Measure Embedding for
Latent Dynamics), a mathematically grounded framework for constructing
low-dimensional latent variables that retain the intrinsic structure
of aerosol populations. We show that any permutation-invariant and
linear encoder must take a scale–shape decomposition, in which total
number concentration is represented explicitly and the latent shape is
a barycentric combination of per-particle embeddings. Taking this
aggregated representation as the latent state does not reduce
diagnostic expressiveness relative to a Deep Sets model; it moves the
nonlinear post-aggregation stage into the learned diagnostic map while
preserving latent linearity. Using particle-resolved data as ground truth, we encode weighted particle populations directly rather than binned aerosol states; size-resolved mass and number distributions are used only as diagnostic targets and visual summaries. This latent space enables accurate reconstruction of size-resolved mass and number distributions as well as CCN spectra, optical coefficients, and immersion-freezing behavior, while preserving the linear population structure needed for hybrid ML–physics aerosol models.
Although the experiments here focus on diagnostic reconstruction, the
embedding is designed as a latent state in which emissions and mixing
can be represented exactly and nonlinear microphysical processes can
be learned in a controlled latent space. This work establishes the
foundation for learning
aerosol-process evolution directly in latent space, which will be
explored in subsequent studies.
\end{abstract}

\section{Introduction}
Atmospheric aerosol populations influence climate through a wide range
of pathways, including direct radiative effects, cloud droplet
activation, and ice nucleation efficiency \citep{Poeschl2005, IPCC2021,
  Mcfiggans2006, Hoose2012}. These processes depend sensitively on
aerosol size, chemical composition, and mixing state
\citep{Mcfiggans2006, Riemer2019, Ching2017, Fierce2016}, all of which
vary across multiple dimensions and evolve under emissions, dilution,
coagulation, condensation and evaporation, chemical aging, and
deposition. Particle-resolved models can describe this complexity in
high detail by representing aerosol populations as size-composition
distributions over thousands of computational particles
\citep{Riemer2009,Gasparik2020}, but their computational cost prevents
their direct application in regional or global climate
models. Consequently, reduced aerosol representations are essential to
make physically realistic microphysics tractable at scale.

Existing reduced representations such as modal and sectional schemes
achieve computational efficiency by imposing strong structural
assumptions on the aerosol population (e.g., fixed lognormal shapes,
internally mixed modes) \citep{gelbard1980sectional, Whitby1997,
  Binkowski1995, Vignati2004, Bauer2008}. These assumptions constrain
the flexibility of the representation and can degrade performance when
microphysical variability, source diversity, or mixing-state effects
play a dominant role, such as in black-carbon aging, multicomponent
urban plumes, or INP-active mineralogy \citep{Ching2017, Fierce2016,
  yao2022quantifying, Tang2026}.

Recent work has begun to explore machine-learning approaches for
aerosol-state representation and aerosol-process emulation, including
graph-neural-network surrogates for particle-resolved aerosol dynamics
and generative latent representations of binned aerosol states
\citep{zheng2021estimating, wang2022learning,
  ferracina2025learning,saleh2025generative,saleh2025reconstructingaerosolstatepartial}. These
studies show that ML methods can provide compact and flexible aerosol
representations, but they do not enforce the linear population
structure needed for exact treatment of emissions, dilution, and
transport. In the first two papers of this series, we pursued latent embeddings of aerosol states using binned size–composition distributions rather than
particle-resolved data.
\citet{saleh2025generative} demonstrated that
variational autoencoders can learn compact latent spaces that preserve
CCN activity, optical properties, and immersion freezing spectra with
high fidelity.
\citet{saleh2025reconstructingaerosolstatepartial} developed
conditional generative models that infer full aerosol states from
partial observations, providing uncertainty-aware predictions across
multiple measurement configurations.
These studies showed that generative models can
efficiently represent binned aerosol populations and reconstruct them
from incomplete information, but they also highlighted the need for a
representation that preserves particle-level mixing state and respects
linear combination of aerosol populations.

While these generative models effectively represent binned aerosol states, they face two limitations that motivate the present work. First, binned size–composition distributions necessarily discard particle-level mixing-state information, even though mixing state strongly influences CCN activity, optical properties, and chemical aging. Second, the nonlinear autoencoders used in these studies do not preserve population addition: when two aerosol populations are mixed, the corresponding latent variables do not combine linearly. This is a fundamental limitation for atmospheric modeling because emissions, dilution, and transport require additive combination of populations. These shortcomings motivate a latent representation that preserves mixing state while respecting the linear structure of aerosol population addition.

\begin{figure}
	\centering
	\includegraphics[width=\textwidth]{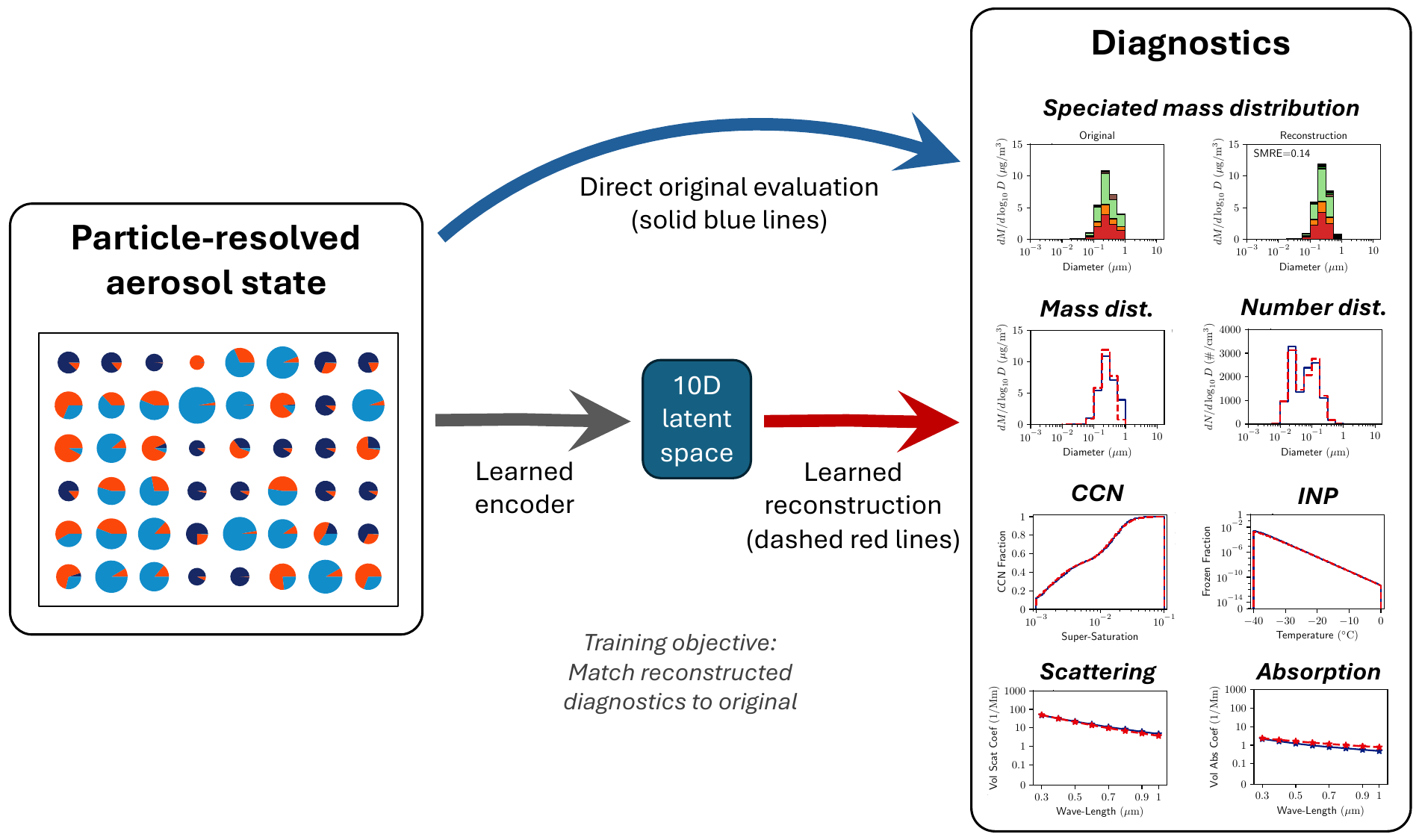}
	\caption{Overview of the AeroMELD framework.}
	\label{fig:01overview}
\end{figure}

In this third paper, we introduce \dfn{AeroMELD} (Aerosol Measure
Embedding for Latent Dynamics), a mathematically grounded
framework that encodes the particle-level aerosol
populations into a low-dimensional latent space (see Figure~\ref{fig:01overview}). We show that, after
choosing a scale coordinate proportional to total number concentration,
any encoder that is both permutation invariant and linear with
respect to the aerosol population must take a specific canonical form:
a \emph{scale-shape decomposition}, in which the total number
concentration is preserved explicitly and the latent ``shape'' is a
barycentric combination of per-particle embeddings. Equivalently, each
latent coordinate is a weighted sum of learned per-particle features;
in the measure-theoretic view developed below, these features define
learned test functions on composition space. Because this weighted
aggregation is independent of the number of particles used to
represent the population, the same trained encoder can be evaluated on
aerosol populations with any number of particles. This structure
yields several advantages. First, it ensures exact latent-space
representations of mixing and scalar concentration changes, allowing
emissions, dilution, and transport to be handled without learned
approximations. Second, it confines learned latent-process models to
the inherently nonlinear components of aerosol evolution (e.g.,
coagulation, gas-particle partitioning), greatly reducing model
complexity. Third, the separation of scale and shape improves
interpretability, prevents pathological latent drift, and enables
modular regularization strategies.

More broadly, by framing aerosol
populations as objects in a semimodule with physically meaningful
combination rules, AeroMELD contributes one of the first latent-space
formulations in the geosciences to enforce \emph{linear population
structure} at the representational level. This linear latent structure
does not require giving up the expressiveness of Deep Sets-style
diagnostic models~\citep{zaheer2017deep}: the nonlinear
post-aggregation stage can be treated as part of the learned latent
diagnostic rather than as part of the stored latent state. In the
experiments below, the base model applies this structure to 25000
aerosol population snapshots and maps a 16000-scalar sampled particle
representation of each population to a 10-dimensional latent state. We
note that the present
work does not attempt to learn the time evolution of aerosol
populations in latent space. Instead, we establish the mathematical
and representational foundations required for such latent-process
models, which will be developed in the next paper of this series.

Together, these developments provide a conceptual and practical
foundation for hybrid ML-physics aerosol simulators that combine exact
latent operators with learned nonlinear dynamics.  AeroMELD offers a
pathway toward reduced-order aerosol microphysics that is compact,
interpretable, and suitable for coupling with emerging ML-based
climate models and Earth system simulators.

Section~\ref{sec:theory} introduces the weighted-particle formulation
of aerosol populations and derives the linear, permutation-invariant
encoder structure underlying AeroMELD.
Section~\ref{sec:data_and_training} describes the particle-resolved
scenario library and the diagnostic operators used for training and
evaluation.  Section~\ref{sec:results} presents results on the learned
latent space, including the performance of latent diagnostics and
visualizations of the latent space structure.
Section~\ref{sec:conclusions} concludes with implications for future
reduced-order aerosol modeling and the integration of latent-space
microphysics into ML-physics hybrid climate frameworks.

\section{The AeroMELD Framework}
\label{sec:theory}

This section lays the mathematical foundations that define the AeroMELD framework. Crucially, we will see the precise sense in which the AeroMELD encoders are linear and how this can be used for aerosol modeling. The notation used in this paper is summarized in Table~\ref{tab:mathnotation}.

\begin{table}
\renewcommand{\arraystretch}{1}
	\centering
	\begin{tabular}{p{0.15\textwidth}p{0.7\textwidth}p{0.15\textwidth}}
	    \midrule
		Notation & Description & Space \\
		\midrule
		$A$ & The dimension of the particle composition space & $\mathbb{Z}$ \\
		$\alpha$ & A non-negative scalar & $\mathbb{R}_{\ge 0}$ \\
		$B$ & The number of diameter bins in the histograms & $\mathbb{Z}$ \\
		$\beta$ & A non-negative scalar & $\mathbb{R}_{\ge 0}$ \\
		$d$ & The dimension of the latent shape component ($d = L - 1$) & $\mathbb{Z}$ \\
		$D$ & The dimension of the diagnostic space & $\mathbb{Z}$ \\
		$\mathcal{D}$ & A diagnostic function & $\Pi \to \mathbb{R}^D$ \\
		$\hat{\mathcal{D}}$ & A scale-invariant or number-normalized diagnostic function & $\Pi \to \mathbb{R}^D$ \\
		$\tilde{\mathcal{D}}_\theta$ & A latent diagnostic function & $\mathbb{R}^L \to \mathbb{R}^D$ \\
		$\hat{\tilde{\mathcal{D}}}_\theta$ & The shape component of a latent diagnostic function & $\mathbb{R}^{L-1} \to \mathbb{R}^D$ \\
		$F$ & An aerosol process function & $\Pi \to \Pi$ \\
		$\tilde{F}_\theta$ & A latent aerosol process function & $\mathbb{R}^L \to \mathbb{R}^L$ \\
		$\mathcal{G}_\theta$ & A Deep Sets diagnostic head used in Section~\ref{sec:aeromeld-deepsets} & $\mathbb{R}^H \to \mathbb{R}^D$ \\
		$H$ & The auxiliary dimension after the Deep Sets post-aggregation map & $\mathbb{Z}$ \\
		$\mathcal{H}_\theta$ & A Deep Sets-style process model on the post-aggregation representation & $\mathbb{R}^H \to \mathbb{R}^H$ \\
		$L$ & The dimension of the latent space & $\mathbb{Z}$ \\
		$\lambda$ & A convex-combination weight & $[0,1]$ \\
		$\lambda$ & The optical wavelength & $\mathbb{R}$ \\
		$\mu$ & The variational latent mean & $\mathbb{R}^d$ \\
		$\mu_i$ & The composition of aerosol particle $i$ & $\mathbb{R}^A$ \\
		$n$ & The total number concentration of an aerosol population & $\mathbb{R}$ \\
		$n_i$ & The number concentration (weight) of particle $i$ & $\mathbb{R}$ \\
		$P$ & The dimension of the parameter space & $\mathbb{Z}$ \\
		$p_i$ & The normalized weight of particle $i$ & $\mathbb{R}$ \\
		$\Phi_\theta$ & The encoder & $\Pi \to \mathbb{R}^L$ \\
		$\hat{\Phi}_\theta$ & The shape encoder & $\hat{\Pi} \to \mathbb{R}^{L-1}$ \\
		$\phi_\theta$ & The per-particle encoder & $\mathbb{R}^A \to \mathbb{R}^{L-1}$ \\
		$\Pi$ & The space of aerosol populations $\pi$ & --- \\
		$\hat{\Pi}$ & The space of normalized aerosol populations $\hat{\pi}$ & --- \\
		$\pi$ & A weighted aerosol population $\{(n_i,\mu_i)\}_{i=1}^N$ & $\Pi$ \\
		$\hat{\pi}$ & A normalized aerosol population $\{(p_i,\mu_i)\}_{i=1}^N$ & $\hat{\Pi}$ \\
		$\rho_\theta$ & The post-aggregation Deep Sets map used in Section~\ref{sec:aeromeld-deepsets} & $\mathbb{R}^L \to \mathbb{R}^H$ \\
		$s$ & The scaled latent shape in the double-scale representation & $\mathbb{R}^{L-1}$ \\
		$s$ & The critical supersaturation level & $\mathbb{R}$ \\
		$\Sigma$ & The variational latent covariance matrix & $\mathbb{R}^{d \times d}$ \\
		$\sigma_i$ & The variational latent standard deviation for dimension $i$ & $\mathbb{R}$ \\
		$T$ & The temperature & $\mathbb{R}$ \\
		$\TT$ & the pre-processing transformation & --- \\
		$\theta$ & The learned parameters & $\mathbb{R}^{P}$ \\
		$w_\mu$ & The KL divergence weight for the mean term & $\mathbb{R}$ \\
		$w_\sigma$ & The KL divergence weight for the variance term & $\mathbb{R}$ \\
		$\Z$ & The standardization operator applying a zero-mean and unit-scaling transformation inferred over the training data & --- \\
		$z$ & The shape component of the latent representation & $\mathbb{R}^{L-1}$ \\
	\end{tabular}
	\caption{The mathematical notation used throughout the paper.}
	\label{tab:mathnotation}
\end{table}

\subsection{Composition Space and Aerosol Populations}

An \dfn{aerosol particle} is represented by its \dfn{composition} $\mu \in \mathbb{R}^A$, where $A$ is the number of chemical species considered. Each component $\mu^a$ represents the mass of species $a$ in the particle, such as sulfate, nitrate, organic carbon, etc. The space $\mathbb{R}^A$ is referred to as the \dfn{composition space}. It is straightforward to extend this representation to include other particle properties, such as particle charge or fractal dimension, by adding additional components to the composition.

An \dfn{aerosol population} is a measure $\pi$ on the composition space, representing the number distribution of particles with different compositions. We denote the space of aerosol populations by $\Pi$. In practice, we represent an aerosol population as a weighted set of particles $\pi = \{(n_i, \mu_i)\}_{i=1}^N$, where $n_i$ is the number concentration of particles with composition $\mu_i$. Technically, we use multisets in the sense of \citet{knuth1997seminumerical}, so that particles with the same composition can appear multiple times in the set.

A \dfn{normalized aerosol population} has weights which sum to one. That is, it has the form $\hat{\pi} = \{(p_i, \mu_i)\}_{i=1}^N$, where $\sum_i p_i = 1$. This can be regarded as a probability measure on the composition space. Given an aerosol population, the \dfn{total number concentration} is $n = \sum_i n_i$ and the corresponding normalized population has weights $p_i = \frac{n_i}{n}$. Equivalently, the total number concentration is the integral of the population measure over the composition space and the normalized population is the probability measure obtained by dividing the population measure by its integral.

We can think of an aerosol population $\pi$ as being composed of two parts: a \dfn{scale component} given by its total number concentration $n$ and a \dfn{shape component} given by its normalized population $\hat{\pi}$. We can thus represent an aerosol population as the pair $(n, \hat{\pi})$. This \dfn{scale-shape representation} will be useful for constructing encoders and diagnostics that respect the linear structure of aerosol populations.

\subsection{Linear Structures on Aerosol Populations}

An important feature of aerosol populations is that they have a \dfn{linear structure}. Physically, the addition or union of two aerosol populations corresponds to mixing them together, and scaling an aerosol population corresponds to changing its overall concentration without changing its composition distribution. These properties are crucial in 3D numerical models for implementing emissions, which add an emitted population into an existing aerosol, and transport, which takes linear combinations of aerosols from neighboring grid cells.

Mathematically, given two aerosol populations $\pi_1$ and $\pi_2$, we can form their \dfn{union} $\pi_1 + \pi_2$ by combining the weighted sets of particles (technically, this is a multiset union $\uplus$ as defined by \citet{knuth1997seminumerical}). We can also \dfn{scale} an aerosol population $\pi$ by a positive scalar $\alpha$ to obtain a new population $\alpha \pi$ with weights $\alpha n_i$. If we are thinking of aerosol populations as measures, then the union corresponds to measure addition and scaling corresponds to multiplying the measure by a scalar. We think of this as a linear structure on the space of aerosol populations $\Pi$, although technically it is an $\mathbb{R}_{\ge 0}$-semimodule rather than a vector space, since we only allow scaling by non-negative scalars and don't have additive inverses.

In the scale-shape representation, scalar multiplication and addition of aerosol populations can be expressed as
\begin{align}
	\label{eqn:population-scalar-multiplication}
	\alpha (n, \hat{\pi}) &= (\alpha n, \hat{\pi}), \\
	\label{eqn:population-addition}
	(n_1, \hat{\pi}_1) + (n_2, \hat{\pi}_2) &= \bigl(n_1 + n_2, \lambda \hat{\pi}_1 + (1 - \lambda) \hat{\pi}_2\bigr), \text{ where } \lambda = \frac{n_1}{n_1 + n_2}.
\end{align}
This is a \dfn{barycentric} or \dfn{convex} addition on the shape components, which reflects the fact that the normalized population of a mixture is a weighted average of the normalized populations of its components. Together these give linear combinations as
\begin{equation}
\label{eqn:population-linear}
\alpha (n_1, \hat{\pi}_1) + \beta (n_2, \hat{\pi}_2)
= \bigl(\alpha n_1 + \beta n_2, \lambda \hat{\pi}_1 + (1 - \lambda) \hat{\pi}_2\bigr), \text{ where } \lambda = \frac{\alpha n_1}{\alpha n_1 + \beta n_2}.
\end{equation}

\subsection{Linear Set Encoders}

We want to construct an \dfn{encoder} $\Phi_\theta : \Pi \to \mathbb{R}^L$ that maps aerosol populations to a \dfn{latent space} $\mathbb{R}^L$, parameterized by $\theta$. The latent space is a low-dimensional representation of the aerosol population, where $L$ is much smaller than the dimension of the original composition space or the number of particles in the population. We want our encoder to have three key properties:
\begin{enumerate}
    \item \dfn{Permutation invariance:} $\Phi_\theta(\pi) = \Phi_\theta(\pi')$ for any permutation $\pi'$ of the particles in $\pi$. This ensures the encoder really acts on the aerosol population as a whole, rather than on the specific ordering of particles in the representation.
    \item \dfn{Linearity:} $\Phi_\theta(\alpha \pi_1 + \beta \pi_2) = \alpha\Phi_\theta(\pi_1) + \beta\Phi_\theta(\pi_2)$, for any positive scalars $\alpha$ and $\beta$ and aerosol populations $\pi_1$ and $\pi_2$. This is important for efficient representation of aerosol processes that involve mixing and scaling of populations, such as emissions and transport.
    \item \dfn{Scale-shape representation:} The latent space should separate scale and shape components as $(n,z) \in \mathbb{R}^L$, so that the first component $n \in \mathbb{R}$ is the total number concentration (the \dfn{scale component}) and the remaining $z \in \mathbb{R}^{L-1}$ is the \dfn{latent shape component}. This allows us to regularize and manipulate scale and shape independently in the latent space. We denote the shape component of the encoder by $\hat{\Phi}_\theta$, so that, for $\pi=(n,\hat{\pi})$, we write $\Phi_\theta(\pi)=(n,\hat{\Phi}_\theta(\hat{\pi}))$. The theorem below characterizes the form imposed on $\hat{\Phi}_\theta$ by linearity and permutation invariance.
\end{enumerate}
The scale-shape representation in the latent space means that scalar multiplication and addition are defined as
\begin{align}
	\label{eqn:latent-scalar-multiplication}
	\alpha (n, z) &= (\alpha n, z), \\
	\label{eqn:latent-addition}
	(n_1, z_1) + (n_2, z_2) &= \bigl(n_1 + n_2, \lambda z_1 + (1 - \lambda) z_2\bigr), \text{ where } \lambda = \frac{n_1}{n_1 + n_2}.
\end{align}
This also technically makes the latent space a $\mathbb{R}_{\ge 0}$-semimodule rather than a vector space and linearity of the encoder should technically be interpreted as semilinearity (a semimodule homomorphism). Combining the above expressions gives the general linear combination
\begin{equation}
\label{eqn:latent-linear}
\alpha (n_1, z_1) + \beta (n_2, z_2)
= \bigl(\alpha n_1 + \beta n_2, \lambda z_1 + (1 - \lambda) z_2\bigr), \text{ where } \lambda = \frac{\alpha n_1}{\alpha n_1 + \beta n_2}.
\end{equation}

The three properties we require of our encoder turn out to restrict it
to a simple deterministic form: a learned nonlinear feature map is
applied to each particle and the resulting features are averaged using
the normalized particle weights.

\begin{theorem}
	\label{thm:encoder-form}
  Every deterministic encoder $\Phi_\theta : \Pi \to \mathbb{R}^L$ that is permutation invariant and linear can be expressed (up to rescalings in the latent scale component) in the form
\begin{align}
	\label{eqn:ecoder-form-scale}
	(n,z) &= \Phi_\theta(\pi) = \bigl(n, \hat{\Phi}_\theta(\hat{\pi})\bigr), & & \text{where } \pi = (n, \hat{\pi}), \\
	\label{eqn:encoder-form-shape}
	z &= \hat{\Phi}_\theta(\hat{\pi}) = \sum_{i=1}^N p_i \phi_\theta(\mu_i), & & \text{where } \hat{\pi} = \{(p_i, \mu_i)\}_{i=1}^N.
\end{align}
Here $\phi_\theta: \mathbb{R}^A \to \mathbb{R}^{L-1}$ is a learned function that maps the composition vector to the $(L-1)$-dimensional latent shape component.
\end{theorem}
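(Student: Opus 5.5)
The plan is to pin down the encoder one particle at a time and then use linearity to reassemble the whole population.

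The first step evaluates $\Phi_\theta$ on \emph{single-particle populations}. For a single particle $\{(1,\mu)\}$ of unit weight, write $\Phi_\theta(\{(1,\mu)\}) = (c(\mu), \phi_\theta(\mu))$. Here $c(\mu)$ is the latent scale coordinate and $\phi_\theta(\mu)\in\mathbb{R}^{L-1}$ is the latent shape coordinate.

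Linearity with respect to scalars then gives the value for any weight $n_i$:
\[
\Phi_\theta(\{(n_i,\mu_i)\}) = n_i \cdot (c(\mu_i), \phi_\theta(\mu_i)) = (n_i c(\mu_i), \phi_\theta(\mu_i)).
\]
This uses the latent scalar multiplication \eqref{eqn:latent-scalar-multiplication}, which rescales only the scale coordinate.

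Next, write a general population as a multiset union $\pi = \sum_{i=1}^N \{(n_i,\mu_i)\}$ and apply additivity repeatedly. Permutation invariance, together with the commutativity and associativity of the latent addition \eqref{eqn:latent-addition}, makes the result independent of the order of summation. A short induction on $N$, using the barycentric rule, gives
\[
\Phi_\theta(\pi) = \Bigl(\textstyle\sum_i n_i c(\mu_i),\ \sum_i \frac{n_i c(\mu_i)}{\sum_j n_j c(\mu_j)} \phi_\theta(\mu_i)\Bigr).
\]
The inductive step is the computation $\lambda z_1 + (1-\lambda) z_2$ with $\lambda$ equal to the ratio of accumulated scales, so the shape coordinate is a weighted average with weights proportional to the scale contributions.

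The remaining step, which I expect to be the main obstacle, is to show that $c(\mu)$ must be constant. This is exactly the content of the proviso ``up to rescalings in the latent scale component''. The scale-shape requirement says the first latent coordinate is the total number concentration, or a quantity proportional to it. Applying this to single particles forces $c(\mu)=c$ for a fixed constant $c>0$, and we normalize $c=1$.

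Once $c$ is constant, the weights above become $n_i/n = p_i$. The scale coordinate becomes $n$, and the shape coordinate becomes $\sum_i p_i \phi_\theta(\mu_i)$, which depends only on $\hat{\pi}$. This yields \eqref{eqn:ecoder-form-scale}--\eqref{eqn:encoder-form-shape}.

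There are technical points to flag. First, the semimodule addition requires $n_1+n_2>0$, so zero-weight particles should be dropped or handled by convention. Second, the statement applies to finitely supported (multiset) populations, so no continuity argument is needed for extending to general measures. Third, should a reader object that $c(\mu)$ could vary if the scale coordinate were merely some additive functional, I would state explicitly that the theorem fixes the scale coordinate as proportional to number concentration.

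Finally, a converse check that the stated form is indeed permutation invariant and linear is a direct computation using \eqref{eqn:population-linear} and \eqref{eqn:latent-linear}. It would be included for completeness.
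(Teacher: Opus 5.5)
Your proof is correct and uses the same core idea as the paper's sketch: reduce to single-particle populations, use homogeneity for the weights, and reassemble by additivity.

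The paper does the reassembly through the double-scale coordinate $s = nz$. There latent addition is ordinary vector addition, so $s(\pi)=\sum_i n_i\phi_\theta(\mu_i)$ is immediate and $z=s/n$ needs no induction. Your barycentric induction is the same computation in $(n,z)$ coordinates. It would collapse to a one-line sum if you passed to $\bigl(\sum_i n_i c(\mu_i), \sum_i n_i c(\mu_i)\phi_\theta(\mu_i)\bigr)$.

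The more substantive difference is that you carry a general scale coefficient $c(\mu)$. The paper instead says that ``by linearity we can choose the first component to be the total number concentration.'' Your version shows this is not a consequence of linearity. Linearity and permutation invariance only give a scale coordinate $\sum_i n_i c(\mu_i)$ with a $c$-weighted barycentric shape, and non-constant $c$ really occurs. For example, taking $c(\mu)$ to be particle mass gives a linear, permutation-invariant encoder. Along the family $\{(p,\mu_1),(1-p,\mu_2)\}$ its shape is a non-affine function of $p$ whenever $c(\mu_1)\neq c(\mu_2)$ and $\phi_\theta(\mu_1)\neq\phi_\theta(\mu_2)$. Hence it cannot equal $\sum_i p_i\tilde\phi(\mu_i)$ for any $\tilde\phi$.

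So constancy of $c$ must come from the scale--shape requirement (property~3, matching the introduction's ``after choosing a scale coordinate proportional to total number concentration''). The ``up to rescalings'' proviso only normalizes the resulting constant to $1$. Your closing remark says this correctly. However, your earlier sentence calling constancy of $c(\mu)$ ``exactly the content of the proviso'' conflates the two and should be reworded.

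In short, the paper's route is shorter, and yours is more general and makes explicit exactly where the number-concentration assumption enters.
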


\begin{proof}
	It is clear that any encoder of the form~(\ref{eqn:ecoder-form-scale})--(\ref{eqn:encoder-form-shape}) is permutation invariant and it is straightforward to check that it is linear with respect to the latent space operations (see \ref{sec:direct-proof-linearity}). Technically, it is a semimodule homomorphism with respect to the semimodule structures on the population space $\Pi$, given by (\ref{eqn:population-scalar-multiplication})--(\ref{eqn:population-addition}), and the latent space $\mathbb{R}^L$, given by (\ref{eqn:latent-scalar-multiplication})--(\ref{eqn:latent-addition}).

	The necessity of the form~(\ref{eqn:ecoder-form-scale}) follows from the fact that scalar multiplication only affects the first component, and by linearity we can choose the first component to be the total number concentration $n$. The form~(\ref{eqn:encoder-form-shape}) follows because, for finite weighted particle populations, linearity makes the scaled shape coordinate additive over particles, while permutation invariance ensures that this additive contribution depends only on each particle composition and weight, not on particle ordering.
\end{proof}

Theorem~\ref{thm:encoder-form} characterizes the latent aerosol state
used for physical operations such as mixing, emissions, and
transport. The aggregation in Eq.~(\ref{eqn:encoder-form-shape}) can
also be read as the finite-particle evaluation of
$z=\int \phi_\theta(\mu)\,\diff\hat{\pi}(\mu)$, and therefore is not
tied to a fixed number of particles. If a sequence of empirical
particle measures converges weakly to a limiting normalized aerosol
measure, and the learned component functions of $\phi_\theta$ are
bounded and continuous on the composition domain, then
the AeroMELD shape coordinates converge to the corresponding integrals
against the limiting measure. The VAE regularization introduced below adds a stochastic
training distribution around this deterministic shape coordinate, but
the covariance of that distribution is an auxiliary training quantity
and is not part of the AeroMELD state itself.

The form of the encoder in Theorem~\ref{thm:encoder-form} is closely related to the Deep Sets architecture of \citet{zaheer2017deep}, which is designed to be permutation invariant. The precise relationship is discussed in Section~\ref{sec:aeromeld-deepsets}.

\subsection{Double-Scale Representation}

The scale-shape representation separates the total number concentration from the normalized population, but it can be useful for the latent operations to re-introduce the scale into the shape component. This leads to a \dfn{double-scale representation} where the scale is present in both components. We define the double-scale latent variable as $(n, s) \in \mathbb{R}^L$, where $s = n z$ is the \dfn{scaled latent shape component}. The latent operations in this representation are much simpler, as scalar multiplication and addition become
\begin{equation}
\label{eqn:double-scale-linear}
\alpha (n_1, s_1) + \beta (n_2, s_2)
= (\alpha n_1 + \beta n_2, \alpha s_1 + \beta s_2).
\end{equation}
See \ref{sec:double-scale-derivation} for a proof of this relationship between the scale-shape and double-scale representations.

While the operations for the double-scale representation are simpler, the scale-shape representation has advantages for interpretability and regularization, since it separates scale and shape explicitly. In this paper, we will primarily use the scale-shape representation for the latent variables, but the double-scale representation will be useful for transport in Section~\ref{sec:transport}.

\subsection{Diagnostic Functions}

The linear structure also carries over to aerosol \dfn{diagnostics}, which are typically linear functionals of the aerosol population. For example, the speciated mass distribution with $B$ bins is a map $\mathcal{D}_{\rm SMD} : \Pi \to \mathbb{R}^{A \times B}$ defined by summing the contributions of each particle to the mass in each bin. Similarly, the number distribution is $\mathcal{D}_{\rm ND} : \Pi \to \mathbb{R}^B$. Each of these diagnostics $\mathcal{D}$ is linear in the aerosol population, so that for any two populations $\pi_1$ and $\pi_2$ and positive scalars $\alpha$ and $\beta$, we have
\begin{equation}
	\mathcal{D}(\alpha \pi_1 + \beta \pi_2) = \alpha \mathcal{D}(\pi_1) + \beta \mathcal{D}(\pi_2).
\end{equation}

Some diagnostics are \dfn{scale-invariant}, or \dfn{number-normalized}, diagnostics. These depend on the normalized population shape rather than on the total number concentration, so that $\hat{\mathcal{D}}(\alpha \pi) = \hat{\mathcal{D}}(\pi)$ for any positive scalar $\alpha$. Many such diagnostics can be obtained by dividing a linear diagnostic by the total number concentration, for example by computing a CCN fraction as the activated number concentration divided by the total number concentration. In this case, such a diagnostic $\hat{\mathcal{D}}$ satisfies the convex combination
\begin{equation}
	\label{eqn:normalized-diagnostic}
	\hat{\mathcal{D}}(\alpha \pi_1 + \beta \pi_2) = \lambda \hat{\mathcal{D}}(\pi_1) + (1 - \lambda) \hat{\mathcal{D}}(\pi_2), \text{ where } \lambda = \frac{\alpha n_1}{\alpha n_1 + \beta n_2},
\end{equation}
and $n_1$ and $n_2$ are the total number concentrations of $\pi_1$ and $\pi_2$, respectively.

\subsection{Latent Diagnostics}

We want to learn \dfn{latent diagnostic} maps $\tilde{\mathcal{D}}_\theta : \mathbb{R}^L \to \mathbb{R}^D$ that map the latent space to diagnostic variables $y \in \mathbb{R}^D$. For example, we will learn latent diagnostics for the speciated mass distribution, number distribution, CCN spectrum, optical properties, or ice nucleation properties. While we could require the latent diagnostics to be linear like the true diagnostics, we will use a more expressive structure by only restricting them to be homogeneous in $n$, meaning they can be factored as
\begin{equation}
	\tilde{\mathcal{D}}_\theta(n, z) = n \hat{\tilde{\mathcal{D}}}_\theta(z),
\end{equation}
where $\hat{\tilde{\mathcal{D}}}_\theta : \mathbb{R}^{L-1} \to \mathbb{R}^D$ is a learned function that maps the latent shape component to the diagnostic variables. In the case of a scale-invariant diagnostic that satisfies~(\ref{eqn:normalized-diagnostic}), we will have
\begin{equation}
	\tilde{\mathcal{D}}_\theta(n, z) = \hat{\tilde{\mathcal{D}}}_\theta(z).
\end{equation}
If we did wish to constrain $\tilde{D}_\theta$ to be linear, $\hat{\tilde{D}}_\theta$ would be a linear map $\mathbb{R}^{L-1} \to \mathbb{R}^D$.

\subsection{Reconstruction Loss Function}

We want to find an encoder and latent diagnostic models that approximate the true diagnostics. This is expressed as the diagram
\begin{equation}
\begin{tikzcd}
  & & y_t \\
  \pi_t \arrow[r, mapsto, "\Phi_\theta"] \arrow[rru, mapsto, bend left=20, "\mathcal{D}"]
  & (n_t, z_t) \arrow[r, mapsto, "\tilde{\mathcal{D}}_\theta"]
  & \tilde{y}_t \arrow[u, dashed, leftrightarrow, "\approx"']
\end{tikzcd}
\end{equation}
Based on this, we have the \dfn{reconstruction loss function}
\begin{align}
	\label{eq:loss_reconstruction}
\mathcal{L}_\text{reconstruct}(\theta) &= \mathbb{E}_\pi \| \mathcal{D}(\pi) - \tilde{\mathcal{D}}_\theta(\Phi_\theta(\pi)) \|^2 \\
&= \mathbb{E}_\pi \| y_i - \tilde{y}_i \|^2,
\end{align}
where $y_i$ are the true diagnostics of population $i$ and $\tilde{y}_i$ are the approximated versions.

\subsection{Population Dynamics}
\label{sec:dynamics}

The empirical focus of this paper is diagnostic reconstruction, but the purpose of the AeroMELD embedding is to provide a latent state on which aerosol dynamics can act. We briefly sketch how dynamics are incorporated to provide a more complete picture, deferring detailed exploration to future work. Figure~\ref{fig:aeromeld-commutative} shows how the AeroMELD framework includes aerosol population dynamics. These dynamics could include coagulation, gas/particle partitioning, dry deposition, or other processes that directly affect the aerosol population. We assume that the dynamics is given by a map $F : \Pi \to \Pi$, which would generally also depend on environmental variables or other system states such as gas concentrations.

\begin{figure}
	\centering
\begin{tikzcd}
  \text{time } t
  & \Pi \arrow[r, "\Phi_\theta"] \arrow[rr, bend left=50, "\mathcal{D}"] \arrow[d, "F"]
  & \mathbb{R}^L \arrow[r, "\tilde{\mathcal{D}}_\theta"] \arrow[d, "\tilde{F}_\theta"]
  & \mathbb{R}^D \\
  %%%%%%%%%%%%%%%%%%%%%%%%%%%%%%%%%%%%%%%%%%%%%%%%%%%%%%%%%%%%%%%%%%%%%%%%%%%%%%%%%%
  \text{time } t+1
  & \Pi \arrow[r, "\Phi_\theta"] \arrow[d, "F"]
  & \mathbb{R}^L \arrow[r, "\tilde{\mathcal{D}}_\theta"] \arrow[d, "\tilde{F}_\theta"]
  & \mathbb{R}^D \\
  %%%%%%%%%%%%%%%%%%%%%%%%%%%%%%%%%%%%%%%%%%%%%%%%%%%%%%%%%%%%%%%%%%%%%%%%%%%%%%%%%%
  & \vdots
  & \vdots
  & \vdots
\end{tikzcd}
\caption{AeroMELD framework with learned encoder $\Phi_\theta$ from the population space $\Pi$ to the latent space $\mathbb{R}^L$, true and learned diagnostics $\mathcal{D}$ and $\tilde{\mathcal{D}}_\theta$ mapping to the diagnostic space $\mathbb{R}^D$, and true and learned time evolutions $F$ and $\tilde{F}_\theta$. This is not a commutative diagram but rather indicates approximate equivalence. The learned maps are trying to make this diagram as close to commutative as possible. \label{fig:aeromeld-commutative}}
\end{figure}

The dynamics can be approximated by a learned map $\tilde{F}_\theta : \mathbb{R}^L \to \mathbb{R}^L$ (generally one time step of a neural ODE~\citep{chen2018neural}) on the latent space, which motivates the dynamics loss
\begin{equation}
\mathcal{L}_\text{dynamics}(\theta) = \mathbb{E}_\pi \| \Phi_\theta(F(\pi)) - \tilde{F}_\theta(\Phi_\theta(\pi)) \|^2.
\end{equation}

While nonlinear dynamics $F$ must be approximated by a learned map $\tilde{F}_\theta$, the situation for linear dynamics is much simpler. Linear dynamics, such as aerosol transport and emissions, can be evaluated directly with no error using the linearity of the AeroMELD latent representation. Table~\ref{tab:processes} summarizes the process representations.

\begin{table}
\begin{tabular}{ll}
Aerosol Process & Representation in AeroMELD \\
\hline
Initial conditions & Direct evaluation of encoder $\Phi_\theta$ (\S~\ref{sec:initial_conditions_and_emissions}) \\
Emissions & Combine latent representations using linear structure (\S~\ref{sec:initial_conditions_and_emissions}) \\
Transport & Combine latent representations using linear structure (\S~\ref{sec:transport}) \\
Coagulation & Approximate learned function $\tilde{F}_{\text{coag},\theta}$ (\S~\ref{sec:dynamics}) \\
Deposition & Approximate learned function $\tilde{F}_{\text{dep},\theta}$ (\S~\ref{sec:dynamics}) \\
Gas/aerosol partitioning & Approximate learned function $\tilde{F}_{\text{gas},\theta}$ (\S~\ref{sec:dynamics})
\end{tabular}
\caption{Overview of aerosol process descriptions in the AeroMELD framework. Note that these processes are described in this paper but their implementation will be explored in future work. \label{tab:processes}}
\end{table}

\subsection{Initial Conditions and Emissions}
\label{sec:initial_conditions_and_emissions}

In an atmospheric model, initial conditions and emissions are typically specified as a linear combination of aerosol populations, each of which has a given binned, modal, or particle-resolved representation. In the AeroMELD framework, we can convert each of these constituent aerosol populations into a latent AeroMELD representation and then take linear combinations as needed in the latent space. This can be done by first sampling particles for each population (if not already in particle-resolved form), then directly evaluating the encoder $\Phi_\theta$ to obtain their latent representations.

For example, suppose we have emissions from $M$ different sectors, each being emitted at a time-dependent rate. Assume $\hat{\pi}_1$, \ldots, $\hat{\pi}_M$ are the normalized aerosol population shapes for each sector, given in particle-resolved form (sampled from binned or modal representations if needed), and $n_1(t), \ldots, n_M(t)$ are their total emitted number concentrations at time $t$. Before the simulation starts, we can do a one-off evaluation of the shape encoder to obtain latent shape representations $z_1$, \ldots, $z_M$, where $z_m = \hat{\Phi}_\theta(\hat{\pi}_m)$ for $m = 1, \ldots, M$. At time $t$, the emitted aerosol population has latent representation
\begin{align}
	n_{\rm emit}(t) &= \sum_{m=1}^M n_m(t), \\
	z_{\rm emit}(t) &= \frac{1}{n_{\rm emit}(t)} \sum_{m=1}^M n_m(t) z_m.
\end{align}
Because the AeroMELD encoder is linear, this combination of sector emissions is exact, and no particle data needs to be stored or re-encoded at each time step. These latent emissions can then be added to the model state using the latent addition operation.

\subsection{Transport (Advection and Diffusion)}
\label{sec:transport}

Transport schemes in atmospheric models typically involve linear operations on aerosol populations, such as advection and diffusion~\citep{durran2010numerical}. In the AeroMELD framework, these linear transport operations can be directly applied in the latent space using the linear structure of the latent representation. For example, if grid cell $k$ at time $t$ has aerosol population represented by latent variable $(n_k^t, z_k^t)$, and the transport scheme computes a linear combination of populations from neighboring grid cells, we can perform the same linear combination in the latent space. This allows for efficient and exact representation of transport processes without needing to reconstruct particle data.

It is especially simple to implement transport processes in the double-scale representation, where both components contain scale information. In this representation, transport operations correspond to standard linear combinations of the latent variables, as shown in Equation~(\ref{eqn:double-scale-linear}). Because of this, the double-scale latent variables can be transported as regular tracers in the atmospheric model, simplifying the implementation. More precisely, before computing transport, we convert the scale-shape latent variables $(n_k^t, z_k^t)$ to double-scale variables $(n_k^t, s_k^t)$, where $s_k^t = n_k^t z_k^t$. We then allow the transport scheme to operate on $(n_k^t, s_k^t)$ as regular tracers. After transport, we convert back to scale-shape representation by computing $z_k^{t+1} = s_k^{t+1} / n_k^{t+1}$.

Note that the use of limiters or other nonlinear stabilization techniques in transport schemes may introduce small nonlinearities. This will mean that using double-scale representation may not be exactly equivalent to transporting the latent variables in scale-shape representation. However, these nonlinearities are typically small.

\subsection{Measure-Theoretic Interpretation}

The AeroMELD framework can be understood entirely in terms of finite
weighted particle populations. For readers who prefer a
measure-theoretic formulation, the same construction can be viewed as
a learned measure embedding. The surrounding linear object is the
vector space of finite signed measures on composition space; aerosol
populations occupy the nonnegative cone inside this vector space. The
learned per-particle map
$\phi_\theta : \mathbb{R}^A \to \mathbb{R}^{L-1}$ defines a collection
of learned test functions
\begin{equation}
	b_{\theta,\ell}(\mu) = [\phi_\theta(\mu)]_\ell,
	\qquad \ell = 1,\ldots,L-1,
\end{equation}
and the AeroMELD coordinates are obtained by pairing these functions
with the aerosol population measure,
\begin{equation}
	z_\ell
	= \int b_{\theta,\ell}(\mu)\,\diff\hat{\pi}(\mu),
	\qquad
	s_\ell
	= n z_\ell
	= \int b_{\theta,\ell}(\mu)\,\diff\pi(\mu).
\end{equation}
Thus the latent shape coordinates are learned linear functionals of
the normalized aerosol measure, while the double-scale coordinates are
the corresponding linear functionals of the unnormalized population
measure.

For a finite weighted particle representation, these measure pairings
reduce to weighted sums,
\begin{equation}
	z_\ell = \sum_{i=1}^N p_i b_{\theta,\ell}(\mu_i),
	\qquad
	s_\ell = \sum_{i=1}^N n_i b_{\theta,\ell}(\mu_i).
\end{equation}
This gives a simple finite-vector intuition. If measures were
restricted to a fixed support $\{\xi_j\}_{j=1}^K$, then a population
$\pi=\sum_{j=1}^K w_j\delta_{\xi_j}$ could be identified with the
weight vector $w\in\mathbb{R}^K$. A learned test function
$b_{\theta,\ell}$ would then be represented by the vector
$(b_{\theta,\ell}(\xi_1),\ldots,b_{\theta,\ell}(\xi_K))$, and the
coordinate $s_\ell$ would be the usual dot product with $w$. AeroMELD
generalizes this construction by learning functions on composition
space itself, so the support particles do not need to lie on a fixed
grid or have fixed cardinality.

In this sense, AeroMELD is similar in spirit to traditional basis
representations such as modal~\citep{Whitby1997} or
sectional~\citep{gelbard1980sectional} methods, which remain
foundational in aerosol modeling (see also~\citep{Riemer2019} for a
modern review). The key distinction is that AeroMELD learns the
functions that define the population coordinates directly from data,
rather than imposing fixed modal shapes, fixed size bins, or a fixed
particle support. This allows the learned coordinates to represent
complex mixing states and composition distributions while remaining
well-defined for any finite particle representation of the population.

This perspective also clarifies the connection between AeroMELD and
other dimensionality reduction techniques, such as principal component
analysis (PCA)~\citep{jolliffe2002pca}, which is closely related to
proper orthogonal decomposition (POD)~\citep{holmes2012turbulence} and
empirical orthogonal functions (EOF)~\citep{hannachi2007eofreview}.
PCA, POD, and EOF methods typically begin by representing each sample
as a vector in a common finite coordinate system, such as a fixed
grid, binning, or support. Their basis vectors are therefore tied to
that chosen discretization. AeroMELD instead learns functions on
composition space and evaluates those functions against each aerosol
measure. The resulting coordinates can be computed for particle
populations with different supports or different numbers of particles.
The scale-shape decomposition and the diagnostic-focused training
objective are additional advantages of this construction.

\subsection{Expressiveness Relative to Deep Sets}
\label{sec:aeromeld-deepsets}

Deep Sets provide a canonical universal architecture for
permutation-invariant functions on finite sets: under appropriate
assumptions, such functions can be represented or approximated by
applying a nonlinear map $\rho_\theta$ to a sum of learned per-element
features $\phi_\theta$~\citep{zaheer2017deep}. It is thus natural to ask
whether the linearity constraint on the AeroMELD latent state sacrifices
expressiveness relative to Deep Sets. The answer is no: for diagnostic reconstruction,
AeroMELD simply changes where the boundary is placed between the
encoder and the downstream learned maps.

In the notation of this paper, the AeroMELD encoder $\Phi_\theta : \Pi
\to \mathbb{R}^L$ contains the permutation-invariant weighted
aggregation over particles. A conventional Deep Sets model would
typically add a nonlinear post-aggregation map $\rho_\theta :
\mathbb{R}^L \to \mathbb{R}^H$ and then a diagnostic head
$\mathcal{G}_\theta : \mathbb{R}^H \to \mathbb{R}^D$. We can
factorize this in two equivalent ways:
\begin{equation}
\label{eqn:aeromeld-deepsets-factorization}
\begin{aligned}
&
\overbrace{
\Pi \xrightarrow{\Phi_\theta} \mathbb{R}^{L}
}^{\substack{\text{AeroMELD}\\\text{encoder}}}
\overbrace{
\xrightarrow{\rho_\theta} \mathbb{R}^{H}
\xrightarrow{\mathcal{G}_\theta} \mathbb{R}^{D}
}^{\substack{\text{AeroMELD learned}\\\text{diagnostic } \tilde{\mathcal{D}}_\theta}}
\\[0.75em]
&
\underbrace{
\Pi \xrightarrow{\Phi_\theta} \mathbb{R}^{L}
\xrightarrow{\rho_\theta} \mathbb{R}^{H}
}_{\substack{\text{Deep Sets}\\\text{encoder}}}
\underbrace{
\xrightarrow{\mathcal{G}_\theta} \mathbb{R}^{D}
}_{\substack{\text{Deep Sets}\\\text{learned}\\\text{diagnostic}}} .
\end{aligned}
\end{equation}
A Deep Sets interpretation would call $\rho_\theta \circ \Phi_\theta$ the encoder, $\mathbb{R}^H$ the latent space, and $\mathcal{G}_\theta$ the learned diagnostic map. In contrast, AeroMELD calls $\Phi_\theta$ the encoder, $\mathbb{R}^L$ the latent space, and absorbs the post-aggregation map into the latent diagnostic,
\begin{equation}
	\tilde{\mathcal{D}}_\theta
	= \mathcal{G}_\theta \circ \rho_\theta.
\end{equation}
Thus the same composed map from aerosol populations to diagnostics is represented in either case. The difference is that AeroMELD stores the linear aggregated representation $\Phi_\theta(\pi)$ as the latent state, rather than the nonlinear transformed representation $\rho_\theta(\Phi_\theta(\pi))$.

This distinction is important for aerosol modeling. If the nonlinear post-aggregation map $\rho_\theta$ were included inside the stored latent state, then the resulting latent variable would generally no longer preserve the linear structure of aerosol population addition. By keeping the latent state at $\mathbb{R}^L$, AeroMELD retains exact linear operations for emissions and transport, while still allowing arbitrary nonlinear maps to act downstream when predicting diagnostics.

A similar perspective applies to learned latent evolution operators, although exact equivalence requires an additional qualification. If a Deep Sets-style process model evolves the post-aggregation representation through a map $\mathcal{H}_\theta : \mathbb{R}^H \to \mathbb{R}^H$, then the corresponding AeroMELD latent process would formally have the structure
\begin{equation}
	\tilde{F}_\theta
	= \rho_\theta^\dagger \circ \mathcal{H}_\theta \circ \rho_\theta,
\end{equation}
where $\rho_\theta^\dagger : \mathbb{R}^H \to \mathbb{R}^L$ is an inverse, right-inverse, or learned return map on the relevant latent manifold. In practice, this means that the nonlinear post-aggregation map $\rho_\theta$ could be used as a shared backbone for both latent diagnostics and learned process models, with separate output heads for diagnostics and evolution. The stored model state would nevertheless remain the AeroMELD latent variable in $\mathbb{R}^L$, preserving the linear operations that motivate the framework.

\section{Data and Training}
\label{sec:data_and_training}

This section details the dataset used for training and evaluating the AeroMELD models for diagnostics. We begin by describing the source of the data, which is a comprehensive library of aerosol scenarios generated by a particle-resolved model. Following this, we outline the calculation of key climate-relevant aerosol diagnostic variables, including CCN spectra, optical properties, and ice nucleation activity. We also describe the methodology for splitting the data into training and testing sets to ensure robust model evaluation, and describe the details of the training procedure.

\subsection{Data Source}

The dataset used in this study is sourced from the scenario library detailed in~\citet{Gasparik2020}. This library was generated using the particle-resolved aerosol box model PartMC-MOSAIC~\citep{Riemer2009,Zaveri2008}. PartMC explicitly tracks the composition and size of thousands of individual computational particles within an evolving population, resolving mixing state and allowing for a detailed representation of aerosol microphysics. Particle coagulation is simulated using a stochastic Monte Carlo approach, while MOSAIC provides the coupled gas- and aerosol-phase chemistry and thermodynamics. Together, this framework captures emissions, coagulation, dilution with the background, and gas–aerosol partitioning, producing a comprehensive dataset of aerosol populations across diverse atmospheric conditions and emission scenarios.

The library comprises 1000 distinct scenarios, each corresponding to a 24-hour simulation. For the dataset used here, we retain the initial state together with 24 hourly outputs, yielding 25 time snapshots per scenario and 25000 aerosol population snapshots in total. The aerosol populations within these scenarios are described by 15 chemical species yielding particle-resolved ensembles. For the neural-network inputs used here, each population snapshot is represented by 1000 weighted particles.
Each particle carries a 15-dimensional chemical composition vector and one scalar weight, so the sampled particle representation contains 16000 scalar particle-state entries per population before encoding. The choice of 1000 particles fixes the tensor size used in these experiments, but it is not an architectural constraint: the AeroMELD encoder is a weighted aggregation over particles and can be evaluated on particle populations with different cardinalities. Unlike conventional bulk or modal representations, this dataset resolves the full evolution of aerosol mixing state, providing a uniquely stringent test for reduced aerosol representations. The model input is the weighted particle population itself, without prior binning by size or aggregation across particles by species. Binned mass and number distributions are computed only downstream as diagnostic targets.

The 15 chemical species tracked in the model are: Sulfate (SO4), Nitrate (NO3), Chloride (Cl), Ammonium (NH4), Sodium (Na), Dust, Black Carbon (BC), Water (H2O), Primary Organic Aerosol (POA), Marine Organic Compounds (MOC), and five lumped precursors for Secondary Organic Aerosol (SOA): high-yield aromatics (ARO1), low-yield aromatics (ARO2), long-chain alkanes (ALK1), olefins (OLE1), and alpha-pinene (API1). These species encompass primary emissions including dust, POA and BC, and secondary aerosols formed from both inorganic and organic gas-phase precursors.

\subsection{Aerosol Diagnostic Variables}

\begin{figure}
	\centering
	\includegraphics[width=0.98\linewidth]{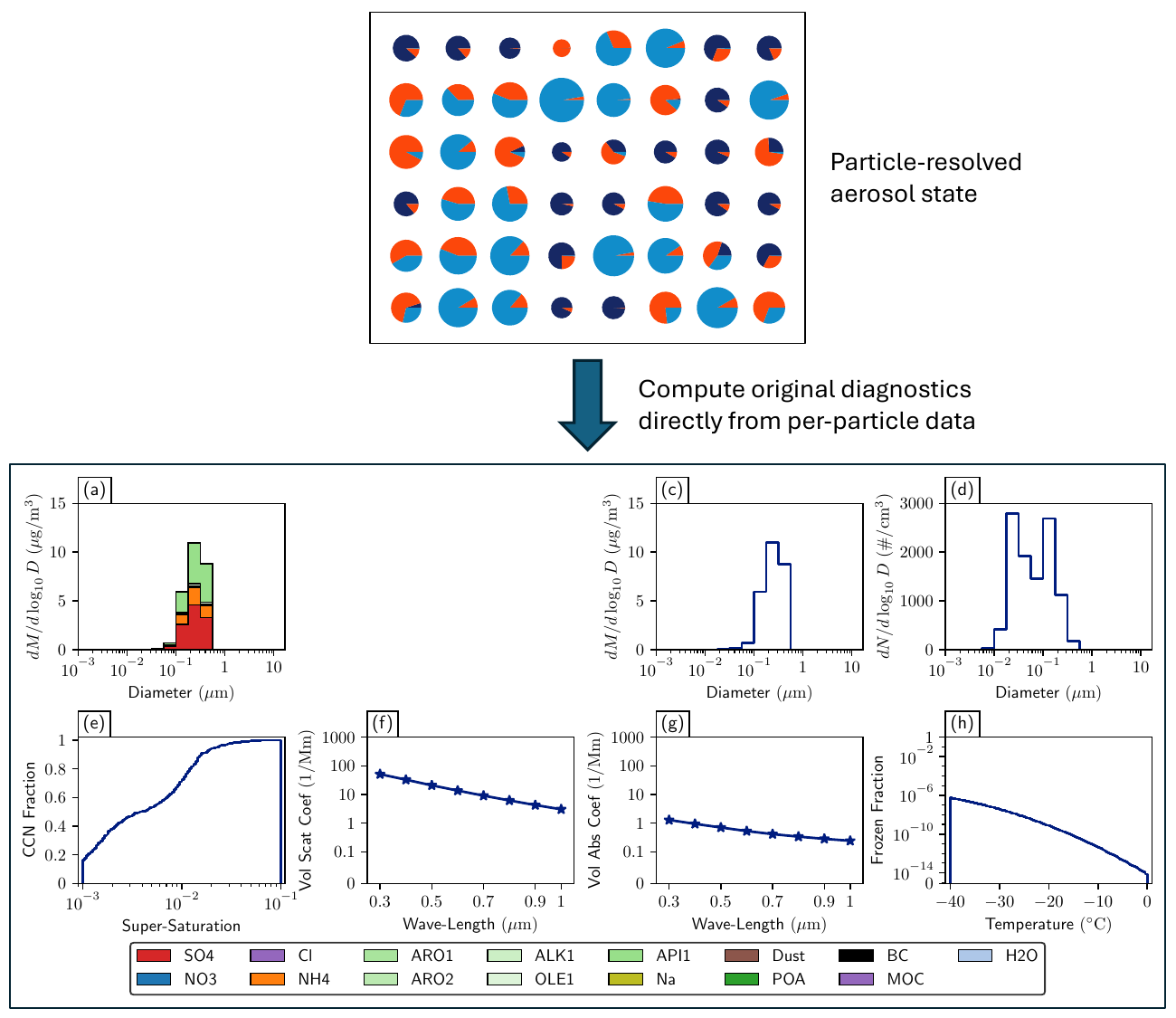}
	\caption{The aerosol diagnostics that we use in this paper, illustrated for a single population.
	(a)~The speciated mass distribution.
	(b)~Omitted for consistency with the layout of Figure~\ref{fig:08anecdiag}.
	(c)~The total mass distribution. 
	(d)~The number distribution. 
	(e)~The CCN spectrum (i.e., the cloud condensation nuclei fraction of the particles at each supersaturation level).
	(f)~The volume scattering coefficient spectrum. 
	(g)~The volume absorption coefficient spectrum.
	(h)~The frozen fraction spectrum. The diagnostics are computed from the weighted particle population and are used as diagnostic targets only; the encoder input remains the particle-resolved state shown at top.}
	\label{fig:02diagintro}
\end{figure}

There are six diagnostic variables studied in this paper, as illustrated in Figure~\ref{fig:02diagintro}, all computed from the underlying weighted particle populations. These diagnostics provide training and evaluation targets for the latent diagnostic maps; they should not be interpreted as the aerosol state representation used by AeroMELD.
The first two are the speciated mass distribution and the number distribution, which capture the size-resolved structure of the population (but ignore mixing state). The remaining four are climate-relevant and consist of CCN spectra, volume absorption and scattering coefficient spectra, and immersion freezing ice nuclei spectra. CCN spectra provide an integrated measure of aerosol size and composition, directly linking particle properties to their ability to form cloud droplets. Because droplet activation is a threshold process that depends on both size and hygroscopicity, CCN spectra serve as a robust benchmark for testing whether compressed representations retain the information most relevant for warm cloud formation. Aerosol scattering and absorption coefficients are central to direct radiative forcing and depend sensitively on mixing state, especially for black carbon and dust. By including optical diagnostics, we directly test whether the latent representations can preserve compositionally dependent absorption and scattering, which are critical for constraining aerosol-radiation interactions. Immersion freezing diagnostics provide a stringent test because ice nucleation is often controlled by trace components such as dust and soot. Small reconstruction errors in these components can lead to large differences in frozen fraction spectra. Including this diagnostic therefore probes the limits of compression methods in capturing the rare, nonlinear processes most important for mixed-phase and cirrus cloud formation. The following briefly describes the methods used to calculate these diagnostics.

\textbf{Speciated Mass Distribution:} For each aerosol population, the
speciated mass distribution provides the size-resolved mass of each
chemical component across the $B$ diameter bins. For every species $a$
and size bin $b$, we compute the mass concentration $m_{a,b}$ by
summing the masses of all particles whose diameters fall within that
bin. This diagnostic retains the detailed size stratification of each
chemical constituent, but by construction it does not capture mixing
state because the masses of different species are aggregated at the
bin level. The speciated mass distribution is one of the two
``vector'' diagnostics used to assess reconstruction fidelity of the
latent representation. For ease of visual comparison, we additionally
plot the total mass distribution obtained by summing $m_{a,b}$ across
all species. An example is shown in Figure~\ref{fig:02diagintro}(a).

\textbf{Number Distribution:} The number distribution $n_b$ describes
the size-resolved particle number concentration across the same set of
logarithmically spaced diameter bins. For each bin, we count the
number of particles whose diameters fall within that interval and
normalize by the simulation volume, yielding a discrete approximation
of $dN/d\log D$. Like the speciated mass distribution, the number
distribution reflects the overall size structure of the population but
does not carry any information about mixing state. Together, the
speciated mass and number distributions constitute the fundamental
size-resolved descriptors of each aerosol population and serve as
baseline diagnostics for evaluating reconstruction accuracy. An
example is shown in Figure~\ref{fig:02diagintro}(d).

\textbf{CCN Spectrum:} We computed the CCN fraction following
\citet{riemer2010}. For each computational particle, the Köhler
equation was solved to determine the critical supersaturation based on
the particle's dry size and chemical composition. At each prescribed
environmental supersaturation, particles with critical supersaturations
below this threshold were counted as activated using their number
weights, and the resulting number-weighted activated fraction was used
to construct the CCN spectrum. Figure~\ref{fig:02diagintro}(e) shows
the CCN fraction spectrum for the population in
Figure~\ref{fig:02diagintro}(a). The CCN diagnostic serves as a
stringent test of whether the latent
representation preserves the information most relevant for warm-cloud
activation.

\textbf{Volume Absorption and Scattering Coefficient Spectrum:} We
computed the volume absorption $\beta_{\rm a}$ and scattering
coefficients $\beta_{\rm s}$ using Mie theory, adapting the treatment
of~\citet{yao2022quantifying}. For bins without dust or black carbon, homogeneous
spheres were assumed, with refractive indices determined from
composition using volume mixing rules. For dust- and BC-containing
bins, a core–shell configuration was assumed, with the absorbing or
refractory material treated as the core and the remaining components
as the shell. Ensemble optical coefficients were then obtained by
summing the bin contributions across the distribution. This treatment
captures the influence of both size and composition on aerosol optical
behavior. Figure~\ref{fig:02diagintro}(g) shows the $\beta_{\rm a}$
spectrum for the population in Figure~\ref{fig:02diagintro}(a), and
Figure~\ref{fig:02diagintro}(f) shows the $\beta_{\rm s}$ spectrum for
the population in Figure~\ref{fig:02diagintro}(a). These diagnostics
are particularly sensitive to both size and composition, making them
an important test of whether the latent space preserves
aerosol–radiation interactions.

\textbf{Frozen Fraction Spectrum:} Ice nucleation properties were
evaluated using the ice nucleation active site (INAS) density
parameterization~\citep{Hoose2012}. For each bin containing an
ice-active component (i.e., dust~\citep{niemand2012particle} or black
carbon~\citep{Schill2020}), the number of active sites was determined
as a function of particle surface area and temperature. The
probability of freezing for each bin was then computed from the
product of its surface area and the parameterized active site
density. By aggregating over the full distribution, we obtained frozen
fraction spectra that represent the immersion freezing behavior of the
ensemble. Figure~\ref{fig:02diagintro}(h) shows the frozen fraction
spectrum for the population in
Figure~\ref{fig:02diagintro}(a). Immersion freezing is extremely
sensitive to trace ice-active components, so small reconstruction
differences can lead to large deviations in this diagnostic, making it
a rigorous test of the limits of latent compression.

\subsection{Preprocessing and Post-processing Transformations}
\label{sec:preproc}

Figure~\ref{fig:03schematic} summarizes the transformations used to
condition the particle-resolved inputs and diagnostic outputs before
they are passed through the neural networks. For each aerosol
population $\pi$, we first separate the total number concentration
$n$ from the normalized population $\hat{\pi}=\{(p_i,\mu_i)\}$, where
$p_i=n_i/n$. The scale $n$ is carried explicitly around
the learned shape encoder, while the neural network acts on the
normalized population.

The encoder-side transformation $\TT_{\rm enc}$ is applied
particle-by-particle to the normalized population. Each particle
composition vector is shifted by a small numerical floor
$\epsilon_\mu$ and standardized with an affine transformation $\Z$
whose centering and scaling constants are estimated from the training
data. The normalized particle weight $p_i$ is retained and used in the
weighted set aggregation. Thus $\TT_{\rm enc}$ produces a numerically
well-conditioned weighted set $u$ while preserving the permutation
invariance and linear weighting structure required by the AeroMELD
encoder. The transformed set is then passed to the variational shape
encoder $\hat{\Phi}_\theta$ defined in
Eq.~(\ref{eqn:encoder-form-shape}), whose weighted aggregation over
the per-particle encoder $\phi_\theta$ produces the deterministic
shape coordinate used as the VAE mean. During training, a separate
auxiliary variance head supplies the covariance of the latent shape
distribution. The scale component $n$ is not sampled or regularized;
it bypasses the VAE and is reintroduced only when diagnostics with
extensive units are reconstructed.

Analogously, the diagnostic targets are transformed before being
compared with the decoder output. We write the scale-normalized
diagnostic vector as $\hat{y}$, where extensive diagnostics such as
mass, number, and optical coefficients are divided by $n$, while
scale-invariant fractional diagnostics such as CCN and frozen fraction
are already number-normalized. The decoder-side transformation
$\TT_{\rm dec}$ separates
the mass-related part of $\hat{y}$ into a total mass magnitude and a
composition-fraction component. Specifically, after adding a small
floor $\epsilon_m$, the total mass block is obtained with an
$\ell_1$-norm and the remaining mass vector is divided by this total
to obtain a normalized composition vector. These magnitude and
composition components are then power transformed and standardized
separately. The remaining diagnostic block is shifted by
$\epsilon_h$, power transformed, and standardized in the same manner.
This decomposition reduces the dynamic range of the learning problem,
keeps near-zero bins numerically stable, and separates total loading
from relative composition.

The learned latent diagnostic model predicts the transformed variable
$v$. To recover physical diagnostics, we apply the inverse
transformation $\TT_{\rm dec}^{-1}$ and then reapply the scale $n$ to
the extensive components. All standardization constants and
transformation parameters are fit using the training data only and are
held fixed for validation and test cases. The reconstruction errors
reported below are computed after these inverse transformations, in
the physical diagnostic variables shown in
Figure~\ref{fig:02diagintro}.

\begin{figure}
	\centering
	\includegraphics[width=0.98\linewidth]{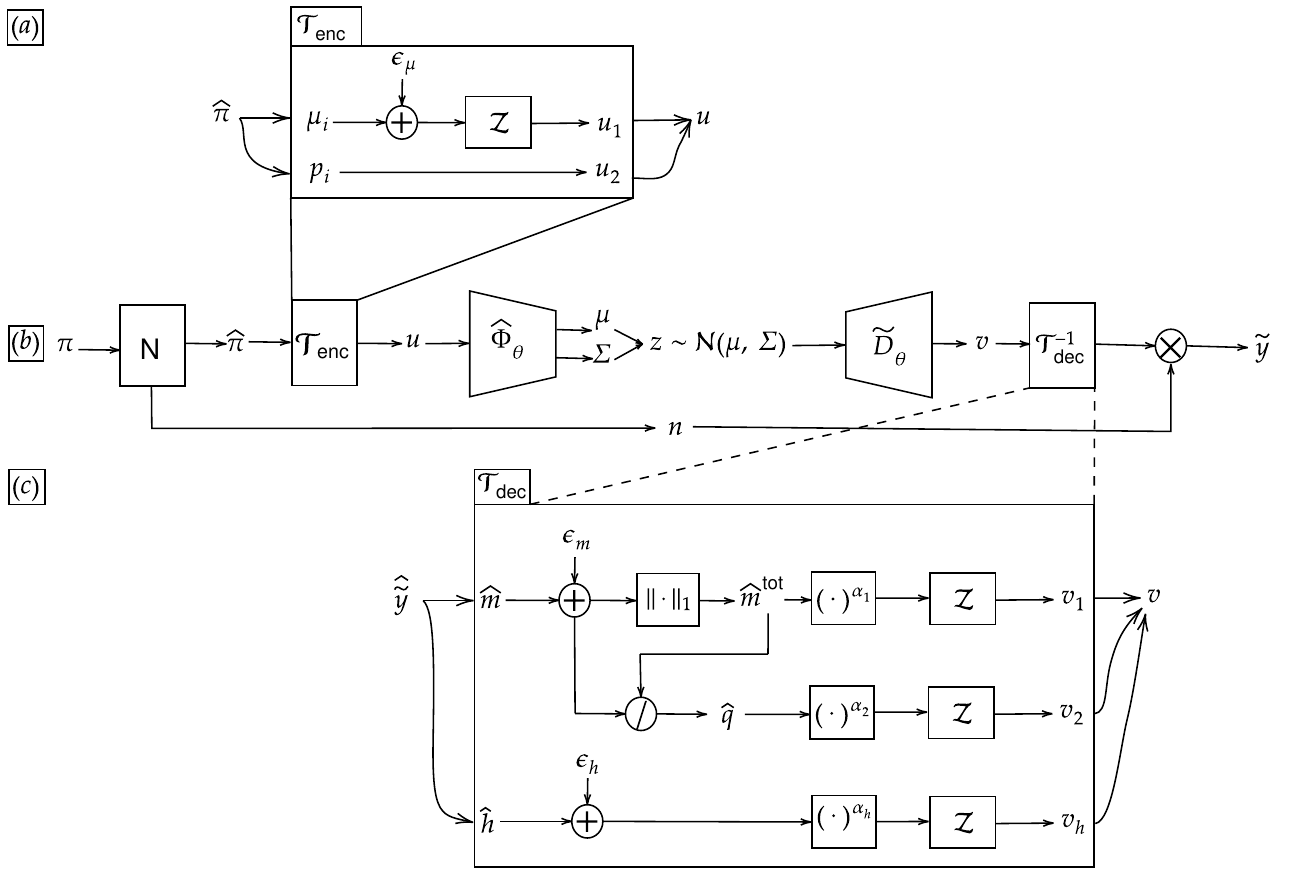}\vspace{-3mm}
	\caption{Preprocessing and post-processing transformations used in the AeroMELD diagnostic pipeline.
	(a) The encoder-side transformation $\TT_{\rm enc}$ maps each normalized particle composition and weight to standardized set features.
	(b) The full variational diagnostic pipeline separates an aerosol population into its total number concentration $n$ and normalized population $\hat{\pi}$, encodes the normalized population into the latent shape variable $z$, predicts transformed diagnostics $v$, and maps them back to physical units.
	(c) The decoder-side transformation $\TT_{\rm dec}$ separates scale-normalized diagnostic targets into total-magnitude, composition-fraction, and remaining diagnostic blocks before applying power transformations and standardization.}\label{fig:03schematic}\vspace{-12mm}
\end{figure}

\subsection{Train and Test Split}

To ensure that our model generalizes to unseen aerosol evolutionary pathways, we partitioned the dataset by randomly splitting entire scenarios into training and testing sets, rather than splitting individual samples. This strategy prevents data leakage from temporally correlated samples within the same scenario. We employed an 80--20 train-test split, assigning 80\% of the scenarios to the training set and 20\% to the test set. Because all 25 time snapshots from each selected scenario are retained, each randomized split contains 20000 training populations and 5000 testing populations. To ensure the robustness of our findings, this process was repeated with 10 different randomization seeds. A separate model was trained for each seed, and all statistics reported in this paper were averaged across these 10 randomized runs.

\subsection{VAE Regularization}

To encourage a smooth and well-structured latent space, we employ variational autoencoder (VAE) regularization~\citep{kingma2013auto,rezende2014stochastic}. The deterministic shape encoder $\hat{\Phi}_\theta$ from Theorem~\ref{thm:encoder-form} is used as the mean map of a Gaussian recognition distribution over $z$, while a separate auxiliary variance head supplies a diagonal covariance. That is, for $d = L - 1$,
\begin{equation}
	\label{eq:vae_encoder}
\begin{aligned}
	\mu_\theta(\hat{\pi}) &= \hat{\Phi}_\theta(\hat{\pi}), \\
	\Sigma_\theta(\hat{\pi}) &=
		\text{diag}\bigl(\sigma_{\theta,1}^2(\hat{\pi}), \ldots,
		\sigma_{\theta,d}^2(\hat{\pi})\bigr), \\
	q_\theta(z \mid \hat{\pi}) &=
		\mathcal{N}\bigl(\mu_\theta(\hat{\pi}), \Sigma_\theta(\hat{\pi})\bigr).
\end{aligned}
\end{equation}
The covariance $\Sigma_\theta$ is used only for VAE sampling and
regularization; the latent shape coordinate used for deterministic
latent operations is the mean $\mu_\theta(\hat{\pi})$. The scale
component $n$ remains deterministic. During training, latent samples
are drawn from this distribution,
\begin{equation}
	\label{eq:vae_sample}
	z \sim q_\theta(z \mid \hat{\pi}),
\end{equation}
using the reparameterization trick to allow gradient propagation. We impose a standard normal prior $p(z) = \mathcal{N}(0, I)$ and add a mean-reduced Kullback--Leibler (KL) divergence penalty to the reconstruction loss~(\ref{eq:loss_reconstruction}). The total training loss is
\begin{equation}
	\label{eq:loss_total}
	\mathcal{L}_{\rm total}(\theta) = \mathcal{L}_{\rm reconstruct}(\theta)
	+ \frac{w_\mu}{2d} \mathbb{E}_\pi\bigl[\|\mu_\theta(\hat{\pi})\|_2^2\bigr]
	+ \frac{w_\sigma}{2d} \sum_{i=1}^{d} \mathbb{E}_\pi\bigl[\sigma_{\theta,i}^2(\hat{\pi}) - 1 - \log \sigma_{\theta,i}^2(\hat{\pi})\bigr],
\end{equation}
where $w_\mu$ and $w_\sigma$ are hyperparameters controlling the regularization strength of the mean and variance terms, respectively. This VAE regularization encourages the encoder to produce latent representations that are spread smoothly across the latent space, which improves generalization and enables meaningful interpolation between aerosol populations.

\subsection{Mixup Regularization}

To further encourage the latent space to support meaningful interpolation, we employ mixup data augmentation~\citep{zhang2018mixup}. During training, we augment the dataset by sampling pairs of aerosol populations $(\pi_A, \pi_B)$ and encoding them to obtain noisy latent representations $(n_A, z_A)$ and $(n_B, z_B)$. We then sample an interpolation coefficient $\gamma \sim \text{Beta}(1, 1)$, which is equivalent to a uniform distribution on $[0, 1]$. The interpolated latent representation is formed by convex interpolation in the scale and shape coordinates,
\begin{align}
	\label{eq:mixup_latent}
	n_{\rm mix} &= \gamma n_A + (1 - \gamma) n_B, \\
	z_{\rm mix} &= \gamma z_A + (1 - \gamma) z_B,
\end{align}
where $\gamma$ is interpreted as a shape-space interpolation weight rather than as a coefficient in an unnormalized physical population mixture. The corresponding target is the same convex interpolation of the scale-normalized diagnostic variables used for training:
\begin{equation}
	\label{eq:mixup_diag}
	\hat{y}_{\rm mix} = \gamma \hat{y}_A + (1 - \gamma) \hat{y}_B,
\end{equation}
where $\hat{y}_A$ and $\hat{y}_B$ are the transformed diagnostic targets for $\pi_A$ and $\pi_B$, respectively. These interpolated samples $(n_{\rm mix}, z_{\rm mix}, \hat{y}_{\rm mix})$ are added to the training set, and the standard reconstruction loss~(\ref{eq:loss_reconstruction}) is computed over the augmented data. This data augmentation encourages the learned latent diagnostics to vary smoothly across the latent shape space, improving generalization to intermediate aerosol states.

\section{Results}
\label{sec:results}

This section presents the results of using the AeroMELD framework for diagnostic reconstruction. We first evaluate the model's ability to reconstruct aerosol diagnostics by examining both individual examples and collective error metrics across the test dataset. We then analyze the structure of the learned latent space to assess how the model organizes the aerosol data.

\subsection{Aerosol Diagnostic Reconstruction Examples}

Figure \ref{fig:08anecdiag} shows an illustrative example comparing
the original aerosol diagnostics (blue) with those reconstructed from
the AeroMELD latent representation (red). The speciated mass
distribution and number distribution (panels a–d) are captured with
high fidelity: the reconstructed modes match the original in both
location and magnitude, with only small discrepancies in bins with
very low mass or number concentration. The climate-relevant
diagnostics (panels e–h) likewise show excellent agreement. The CCN
spectrum closely matches the original activation curve across the full
range of supersaturations. The absorption and scattering coefficient
spectra closely follow the originals across wavelengths, indicating
that AeroMELD retains the composition-dependent optical signatures
associated with absorbing and scattering species. The frozen-fraction
spectrum also aligns well, even though immersion freezing is highly
sensitive to trace dust and BC. Panels (i$1$)--(i${15}$) and
(j$1$)--(j${15}$) further decompose the speciated mass distributions by
individual species. The common vertical scale in the (i) panels
emphasizes the dominant components, whereas the independently scaled
(j) panels deliberately magnify differences in trace species that are
visually suppressed on the common scale. For ARO2, Na, MOC, and H2O,
the absolute reconstructed mass is small relative to the dominant mass
components, but the relative errors can be appreciable. In particular,
the reconstruction assigns small positive amounts of Na, MOC, and H2O
where the reference values are zero or nearly zero, overestimates
portions of the POA and BC distributions, and underestimates dust. Thus,
the model captures the dominant mass structure and associated size
ranges but does not enforce exact sparsity or exact species presence
and absence. Taken together, this example illustrates that a
low-dimensional AeroMELD latent variable reconstructs a wide range of
aerosol diagnostics---including threshold processes and
composition-specific features---with high accuracy, while trace-species
reconstruction remains more challenging.

\begin{figure}
	\centering
	\includegraphics[page=19,width=0.8\linewidth]{figures/03_test_anec_base.pdf}
	\includegraphics[page=20,width=0.8\linewidth]{figures/03_test_anec_base.pdf}
	\includegraphics[page=21,width=0.82\linewidth]{figures/03_test_anec_base.pdf}
	\vspace{-3mm}\caption{The aerosol diagnostics for the original and reconstructed variants of a particular test sample. The (a)--(h) plots show the speciated mass and number distributions, the CCN spectrum, the volume scattering and absorption coefficient curves, and the frozen fraction spectrum. The (i$_1$)--(i$_{15}$) plots show the mass distributions conditioned for each chemical. The (j$_1$)--(j$_{15}$) plots show the same mass distributions, with the vertical axis being independently scaled. The particular sample in the figure has a speciated mass relative error of 0.14. Binned distributions in panels (a)–(d) and (i)–(j) are diagnostic projections of the particle-resolved state, not the input representation.}\label{fig:08anecdiag}
\end{figure}

\subsection{Collective Aerosol Diagnostic Summaries}

Figures \ref{fig:09smrytest} and \ref{fig:09mnsmrytest} summarize the
reconstruction performance across the test dataset. For the
climate-relevant diagnostics (Figure \ref{fig:09smrytest}), errors are
generally small: CCN spectra, optical coefficients, and
frozen-fraction curves all show strongly right-skewed error
distributions, with most samples clustering at low error. The
reconstructed diagnostics closely follow the 1:1 line, indicating that
the latent representation captures both the overall shape and
magnitude of these quantities across several orders of
magnitude. Notably, even the frozen-fraction spectra—highly sensitive
to trace dust and BC—are reproduced with good fidelity.  Figure
\ref{fig:09mnsmrytest} shows the corresponding error distributions for
the speciated mass, total mass, bulk mass, and number
distributions. These diagnostics also exhibit low typical errors with
right-skewed distributions, reflecting consistently accurate
reconstruction of the size-resolved structure of the aerosol
population. Number-distribution errors are smallest, while mass-based
diagnostics show somewhat larger variation, primarily for cases with
very low loadings. Overall, the results demonstrate that the AeroMELD
latent space provides reliable reconstruction across all diagnostic
types. Quantitative summary statistics are given in Table
\ref{tab:rcnsterrs}.

\begin{figure}
	\centering
	\includegraphics[width=0.9\linewidth]{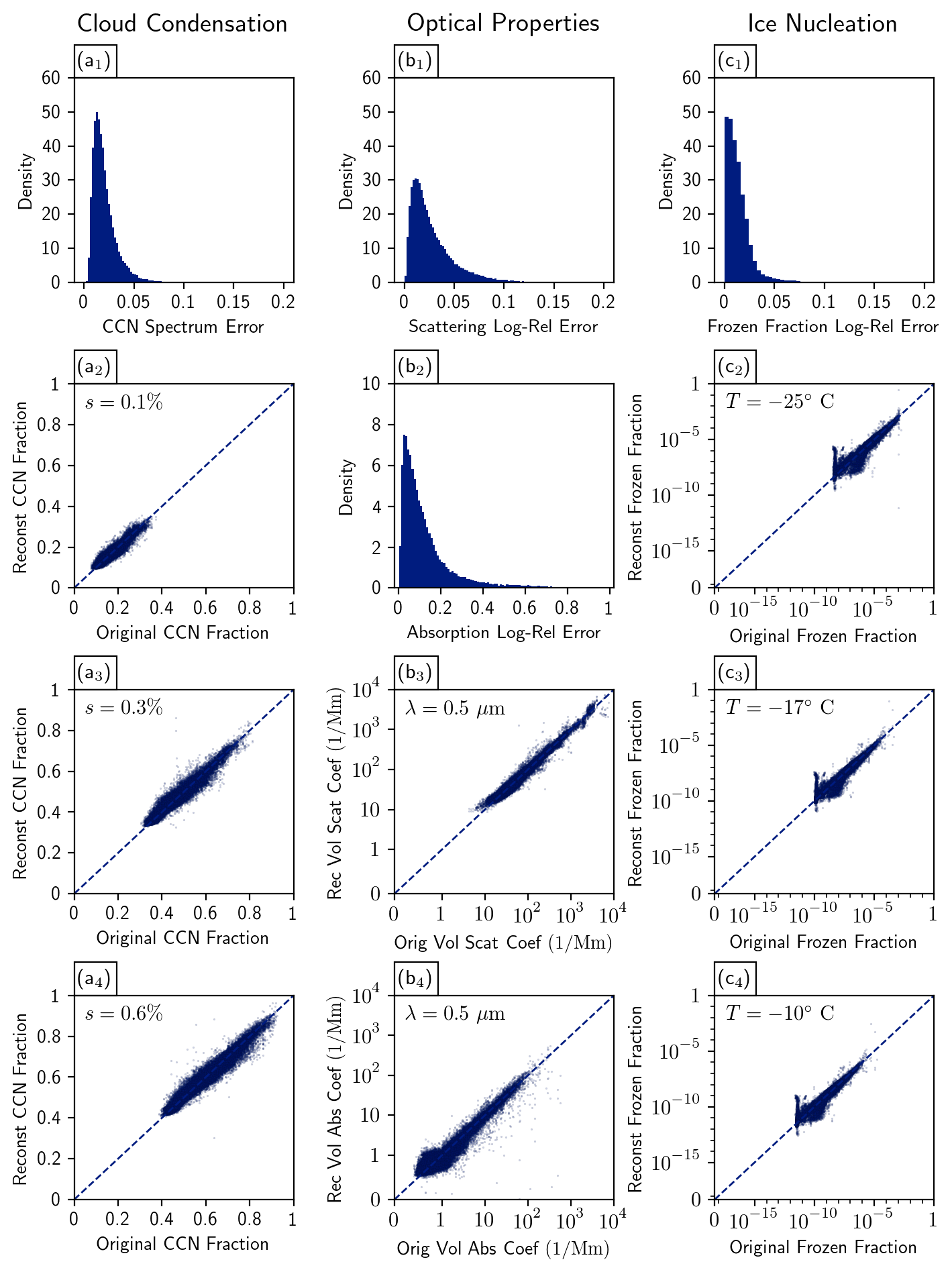}
	\caption{The collective aerosol diagnostic summary plots on the test data. 
	The left column shows (1) the histogram of the CCN spectrum error, and (2) the reconstructed vs. original CCN fractions at three different supersaturation levels ($s=0.1\%$, $0.3\%$, and $0.6\%$).
	The middle column shows the scattering and absorption error histograms, and 
	the corresponding reconstructed vs. original scatter plots at a wavelength of $\lambda=0.5 \, {\rm \mu m}$. The right column shows the frozen fraction error histogram, and the reconstructed vs. original scatter plots at three different temperatures of $T=-25 \, {\rm ^{\circ} C}$, $-17 \, {\rm ^{\circ} C}$, and $-10 \, {\rm ^{\circ} C}$.}\label{fig:09smrytest}
\end{figure}

\begin{figure}
	\centering
	\includegraphics[width=0.9\linewidth]{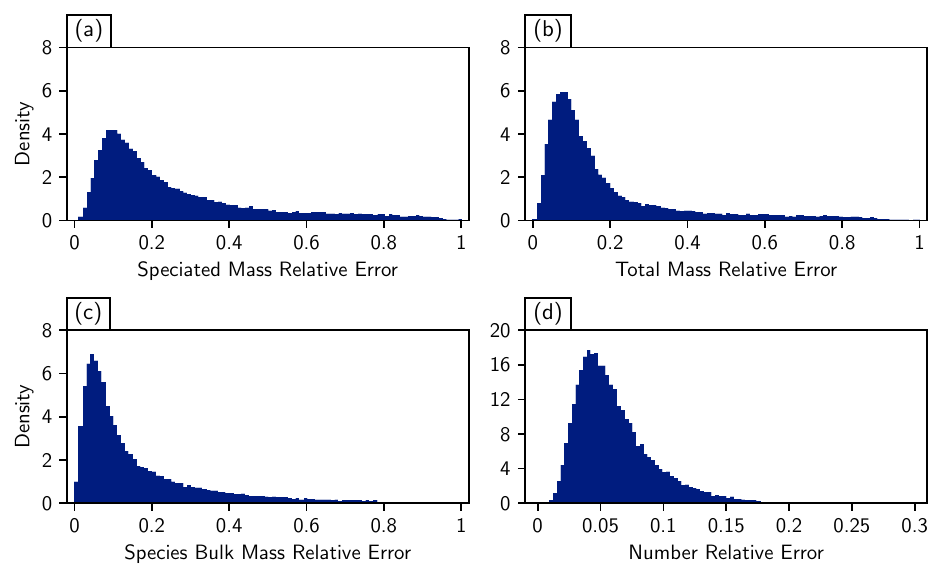}
	\caption{The relative error histogram of the speciated, total, and bulk mass and number distributions for the testing dataset.}\label{fig:09mnsmrytest}
\end{figure}

% The High KL model reconstruction errors
\renewcommand{\arraystretch}{1}
\newcommand{\ZR}{\phantom{0}}
\begin{table}
	\centering	
	\begin{tabular}{p{0.45\textwidth}p{0.08\textwidth}p{0.25\textwidth}}
	\midrule
	Aerosol Diagnostic Metric & \ZR Mean & [95\% CI] \\ \midrule
	CCN Spectrum Relative Error & \ZR2.04\% & [\ZR1.92\%,\ZR2.20\%] \\\hline
	Scattering Log-Rel Error & \ZR2.72\% & [\ZR2.6\%,\ZR2.85\%] \\\hline
	Absorption Log-Rel Error & 12.8\ZR\% & [11.9\ZR\%,13.5\ZR\%] \\\hline
	Frozen Fraction Log-Rel Error & \ZR1.37\% & [\ZR1.29\%,\ZR1.46\%] \\\hline
	Speciated Mass Relative Error & 25.8\ZR\% & [25.1\ZR\%,26.4\ZR\%] \\\hline
	Number Relative Error & \ZR6.20\% & [\ZR5.98\%,\ZR6.50\%] \\\hline
	Total Mass Relative Error & 19.3\ZR\% & [18.8\ZR\%,19.8\ZR\%] \\\hline
	Species Bulk Mass Relative Error & 16.6\ZR\% & [16.0\ZR\%,17.2\ZR\%] \\\hline
	\hline
	\end{tabular}
	\caption{The average errors of the trained model on the held-out test set.}
	\vspace{-5mm}
	\label{tab:rcnsterrs}
\end{table}

\subsection{Low-Dimensional Visualizations}

Figure~\ref{fig:10mdscnn} shows two-dimensional visualizations of the
learned latent shape space and of the speciated mass diagnostics. The
top row uses t-SNE to emphasize local neighborhood structure in the
latent variables, while the middle row shows the corresponding PCA
projection. In both projections, the test samples overlap the
training samples rather than forming separated clusters, indicating
that held-out aerosol populations are mapped onto the same learned
manifold as the training data. Unlike the training and test samples,
the generated cases do not begin with a particle-resolved aerosol
population. Instead, their latent shape coordinates are sampled
directly from the Gaussian prior and passed through the learned latent
diagnostic model, without first encoding a physical-space population.
The generated samples cover many of the same neighborhoods as the
embedded data in the t-SNE projection, while their PCA projection is
concentrated near the center because their latent shape coordinates
are sampled from the Gaussian prior. These projections therefore
provide qualitative evidence that the regularized latent space can
generate diagnostic combinations resembling those represented in the
dataset.

The bottom row of Figure~\ref{fig:10mdscnn} projects the
high-dimensional speciated mass distributions using t-SNE. For both
training and test data, the reconstructed diagnostics are interleaved
with the corresponding original diagnostics, showing that the
reconstruction preserves neighborhood relationships among aerosol
populations in diagnostic space. The generated diagnostic outputs also
occupy many of the same neighborhoods as the original and reconstructed
samples. Because the present model decodes diagnostics rather than a
particle-resolved population, this overlap does not by itself establish
that every generated diagnostic vector corresponds to a jointly
realizable aerosol population. Instead, it provides evidence of
generative coverage and latent-space regularity. The physical
interpretation of the latent state comes from its barycentric mixing
behavior and its representation as learned test-function moments.
Together, the latent-space and diagnostic-space projections indicate
that AeroMELD learns a compact representation that generalizes to
unseen scenarios while preserving the dominant structure of the
speciated aerosol mass distributions.

\begin{figure}
	\centering
	\includegraphics[width=0.98\linewidth]{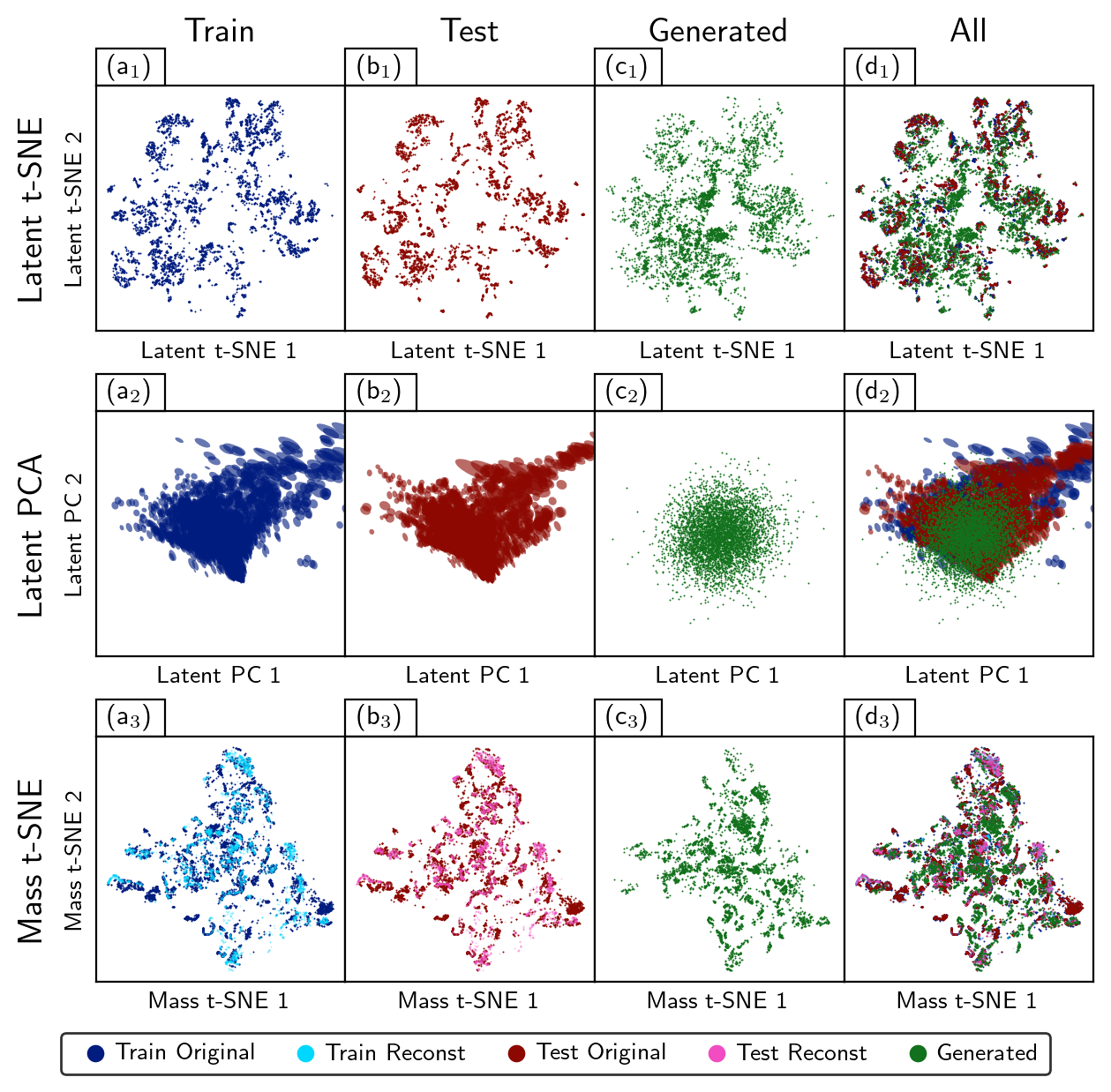}
	\caption{Low-dimensional visualizations of the samples. The \textit{top and middle rows} show the 2D t-SNE and PCA representation of the samples in the latent space. The \textit{bottom row} shows the 2D t-SNE representation of the speciated mass distributions. The \textit{left, middle-left, and middle-right columns} are dedicated to the training, testing, and generated samples, respectively. Generated samples begin from latent shape coordinates sampled from the Gaussian prior, rather than from particle-resolved populations, and are passed through the latent diagnostic model to obtain their diagnostic outputs. The \textit{right column} is a compilation of all three groups in one plot. The horizontal and vertical axes are shared in each row.}\label{fig:10mdscnn}
\end{figure}

\subsection{Hyperparameter Sensitivity}

Figure~\ref{fig:10abltn} summarizes the sensitivity of the model to
three hyperparameters using the geometric average of the diagnostic
errors listed in Table~\ref{tab:rcnsterrs}. The strongest dependence
is on the latent dimension. Very small latent spaces underfit the
particle-resolved distributions, while the average diagnostic error
decreases rapidly as the dimension is increased from 2 to 10. Beyond
this range, the improvement is comparatively small, indicating that the
main configuration lies near the point of diminishing returns for this
dataset.

The KL-weight sweep shows the expected tradeoff between accurate
reconstruction and regularization of the latent distribution. Weak to
moderate KL regularization preserves the aerosol diagnostics most
accurately, whereas stronger KL weights increase the diagnostic error
by forcing the encoded populations closer to the Gaussian prior. The
mixup weight has a weaker effect over the range tested: the error is
relatively flat, with a modest improvement at larger weights. This
suggests that mixup improves the smoothness of the learned
shape-space interpolation without imposing a substantial reconstruction
penalty. The boxed settings in Figure~\ref{fig:10abltn} indicate the
configuration used for the main reconstruction and visualization
results.

\begin{figure}
	\centering
	\includegraphics[page=1,width=0.98\linewidth]{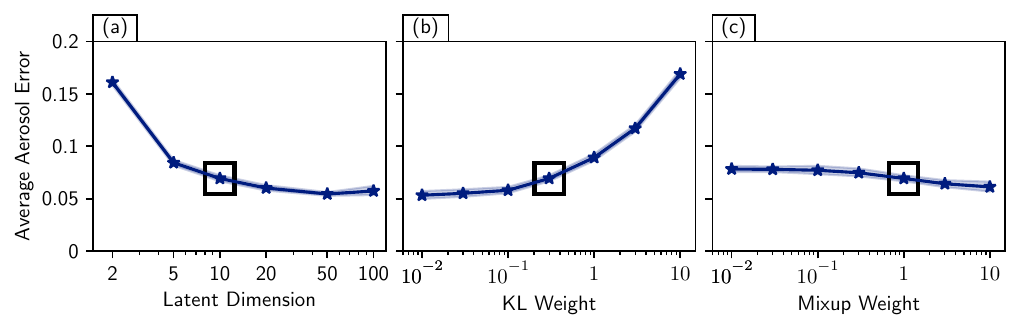}
	\vspace{-3mm}\caption{Ablating the average aerosol diagnostic metrics against (a) the latent dimensionality, (b) the KL weight, and (c) the mixup regularization weight. The vertical axis denotes the geometric average of the aerosol diagnostic metrics detailed in Table~\ref{tab:rcnsterrs}. The boxed settings indicate the configuration used for the main results.}\label{fig:10abltn}
\end{figure}

\subsection{Interpolation of Aerosol Populations}

Figure~\ref{fig:10interp} evaluates interpolation between two held-out
aerosol populations. The endpoint populations are encoded, then
their latent coordinates and corresponding diagnostic targets are
interpolated at several values of the interpolation weight $\lambda$.
This provides a direct test of whether the linear structure of the
AeroMELD latent space produces meaningful intermediate diagnostic
states, rather than only accurate reconstructions of the endpoint
samples. The mixup regularization introduced above encourages this
behavior during training; here, the interpolation is used as a
held-out diagnostic of the learned representation.

In the representative example shown in Figure~\ref{fig:10interp}, the
speciated mass and number distributions evolve smoothly from one
endpoint to the other as $\lambda$ increases from 0 to 1. The
reconstructed diagnostic curves closely track the corresponding
interpolated targets for the CCN spectrum, optical coefficients, and
frozen fraction, including across the intermediate values of
$\lambda$. This indicates that latent interpolation produces plausible
intermediate diagnostic states and preserves the climate-relevant
diagnostic structure along the path between observed samples.

\begin{figure}
	\centering
	\includegraphics[page=7,width=0.8\linewidth]{figures/123_test_interp_base.pdf}
	\vspace{-3mm}\caption{Representative interpolation between two held-out aerosol populations. Columns show interpolation weights from $\lambda=0$ to $\lambda=1$. Rows (a) and (b) show the target and reconstructed speciated mass distributions, respectively; row (c) shows the number distribution; rows (d)--(g) show the CCN spectrum, scattering coefficient, absorption coefficient, and frozen fraction. Blue curves denote the interpolated target diagnostics and red curves denote the corresponding latent reconstructions.}\label{fig:10interp}
\end{figure}

\section{Conclusions}
\label{sec:conclusions}

In this work, we introduced AeroMELD, a mathematically grounded
framework for constructing low-dimensional latent representations of
aerosol populations that preserve their intrinsic physical and
structural properties. AeroMELD enforces permutation invariance and
linearity with respect to aerosol population addition, leading to a
canonical scale–shape decomposition in which total particle number
concentration is represented explicitly and the latent shape is a
barycentric combination of per-particle embeddings. This structure
can equivalently be viewed as evaluating learned test functions
against the aerosol population measure, so the representation is not
tied to a fixed particle support or particle count. It ensures that
mixing and scalar concentration changes can be
represented exactly in latent space and that latent variables behave
like tracers under emissions and transport, which are capabilities
essential for eventual integration into three-dimensional atmospheric
models. This linearity should not be interpreted as a loss of
diagnostic expressiveness: relative to a Deep Sets diagnostic
pipeline, AeroMELD simply keeps the nonlinear post-aggregation map in the
learned diagnostic rather than in the stored latent state. In the base
experiments, this stored state has 10 dimensions, compared with 16000
scalar particle-state entries in the sampled 1000-particle
representation used as input, corresponding to a nominal 1600-fold
reduction in scalar state dimension. Although these experiments use
1000-particle samples, the same trained encoder can be applied to any
finite weighted particle population.

This paper advances beyond the first two papers in the series, which
demonstrated that variational autoencoders and conditional generative
models can efficiently represent and reconstruct binned aerosol
size–composition distributions. Those earlier models captured many
climate-relevant diagnostics but were limited by the loss of
mixing-state information and by nonlinear latent spaces in which
aerosol populations did not combine linearly. As a result, latent
variables in Papers 1 and 2 could not support physically consistent
representations of emissions or transport and were not suitable for
embedding aerosol-process operators. In contrast, AeroMELD uses
particle-resolved data and embeds the linear structure of aerosol
populations directly into the latent space, overcoming these two
central limitations.

Using particle-resolved simulations as ground truth, we demonstrated
that the AeroMELD latent space reconstructs aerosol diagnostics with
high fidelity across diverse categories. Size-resolved speciated mass
and number distributions are accurately recovered, and
climate-relevant diagnostics—including CCN spectra, optical absorption
and scattering coefficients, and immersion-freezing behavior—are
reproduced across wide dynamic ranges and in the presence of threshold
nonlinearities. These results show that a compact latent
representation derived from the underlying population semilinearity
can capture both smooth size-resolved features and sensitive,
composition-dependent processes. These diagnostic results establish
that the AeroMELD latent state retains the information needed before
learning nonlinear aerosol-process operators. Because emissions and
mixing operations are exact in latent space, only the inherently
nonlinear components of aerosol evolution—such as coagulation or
gas–particle partitioning—require learned latent operators. Future
work in this series will build on this framework to develop and
evaluate latent-space dynamical models, enabling fast, physically
consistent aerosol evolution suitable for coupling with
next-generation ML-based Earth system models.

%\clearpage
%%%%%%%%%%%%%%%%%%%%%%%%%%%%%%%%%%%%%%%%%%%%%%%
%
% DATA SECTION and ACKNOWLEDGMENTS
%
%%%%%%%%%%%%%%%%%%%%%%%%%%%%%%%%%%%%%%%%%%%%%%%

\section*{Conflict of Interest}
The authors declare no conflicts of interest relevant to this study.

\section*{Open Research Section}

The underlying data for this study can be accessed at \url{https://doi.org/10.13012/B2IDB-2774261_V1}. The code used for the analysis is available at \url{https://github.com/ehsansaleh/partnn}.

\section*{Acknowledgments}

This work used GPU resources at the Delta supercomputer of the National Center for Supercomputing Applications through Allocation CIS220111 from the Advanced Cyberinfrastructure Coordination Ecosystem: Services and Support (ACCESS) program~\citep{boerner2023access}, which is supported by National Science Foundation grants \#2138259, \#2138286, \#2138307, \#2137603, and \#2138296.

This work was supported by the U.S. Department of Energy, Office of Science, Office of Biological and Environmental Research under Award Number DE-SC0022130, and the Laboratory Directed Research and Development program at Sandia National Laboratories. Sandia National Laboratories is a multimission laboratory managed and operated by National Technology and Engineering Solutions of Sandia LLC, a wholly owned subsidiary of Honeywell International Inc. for the U.S. Department of Energy’s National Nuclear Security Administration contract DE-NA0003525.

This paper describes objective technical results and analysis. Any subjective views or opinions that might be expressed in the paper do not necessarily represent the views of the U.S. Department of Energy or the United States Government.

%%%%%%%%%%%%%%%%%%%%%%%%%%%%%%%%%%%%%%%%%%%%%%%
% Appendices
%%%%%%%%%%%%%%%%%%%%%%%%%%%%%%%%%%%%%%%%%%%%%%%
%\clearpage
\appendix
\renewcommand{\thesection}{Appendix \Alph{section}}

\section{Additional Results}

Figures~\ref{fig:a05anecdiag} and~\ref{fig:a06anecdiag} present other examples of the model's reconstruction performance on a single, specific test sample, following the same detailed layout as Figure~\ref{fig:08anecdiag}. Together, these examples provide a more comprehensive view of the model's capabilities by showcasing its performance on different cases. These particular samples have a speciated mass relative error of 0.09 and 0.17, respectively, spanning both lower and higher errors than the sample in Figure~\ref{fig:08anecdiag}. The higher-error example illustrates that even when individual species, such as dust, are reconstructed less accurately, the climate-relevant diagnostics are still generally recovered with high accuracy.

\begin{figure}
	\centering
	\includegraphics[page=7,width=0.8\linewidth]{figures/03_test_anec_base.pdf}
	\includegraphics[page=8,width=0.8\linewidth]{figures/03_test_anec_base.pdf}
	\includegraphics[page=9,width=0.82\linewidth]{figures/03_test_anec_base.pdf}
	\caption{The aerosol diagnostics for another test sample with the same layout as Figure~\ref{fig:08anecdiag}. The particular sample in this figure has a speciated mass relative error of $0.09$.}\label{fig:a05anecdiag}
\end{figure}

\begin{figure}
	\centering
	\includegraphics[page=25,width=0.8\linewidth]{figures/03_test_anec_base.pdf}
	\includegraphics[page=26,width=0.8\linewidth]{figures/03_test_anec_base.pdf}
	\includegraphics[page=27,width=0.82\linewidth]{figures/03_test_anec_base.pdf}
	\caption{The aerosol diagnostics for another test sample with the same layout as Figure~\ref{fig:08anecdiag}. The particular sample in this figure has a speciated mass relative error of $0.17$.}\label{fig:a06anecdiag}
\end{figure}

\section{Implementation Details}

Models were trained on the particle-resolved aerosol data using an 80--20 train--test split over simulation scenarios, with all time points retained within each selected scenario, and results were aggregated over 10 random splits. The latent state had dimension $L=10$, consisting of a one-dimensional scale coordinate for total number concentration and a nine-dimensional latent shape coordinate. Since each sampled input population contains $1000 \times (15 + 1) = 16000$ scalar particle-state entries, this corresponds to a nominal 1600-fold reduction in scalar state dimension for the sampled representation used here. This particle count fixes the tensor shape and nominal compression ratio in the present experiments, but the encoder is defined as a weighted aggregation and can be evaluated on different numbers of particles. The encoder applied a two-hidden-layer MLP with 256 hidden units to each normalized particle composition vector, multiplied the resulting particle features by the normalized particle weights, and summed them to obtain the mean and diagonal covariance parameters for the latent shape distribution. The total number concentration was deterministically encoded as the scale coordinate. The decoder used a two-hidden-layer MLP with 256 hidden units to map the latent shape coordinate to the transformed aerosol diagnostics, while passing the scale coordinate through separately. We used ReLU activations and no normalization layers.

We trained with the Adam~\citep{kingma2014adam} optimizer using a learning rate of $10^{-3}$, a mini-batch size of 16, and 100000 training iterations. The reconstruction loss was the mean squared error in the transformed diagnostic space described in Section~\ref{sec:preproc}; the base model used a mean-reduced KL weight of 0.3 for both the latent mean and covariance terms and a mixup loss weight of 1.0, with interpolation coefficients sampled from $\mathrm{Beta}(1,1)$. For the diagnostic calculations, the CCN spectrum used 100 logarithmically spaced supersaturation values from $10^{-3}$ to $10^{-1}$ at $T=287\,\mathrm{K}$, with 150 Newton iterations for the critical-diameter calculation. Optical scattering and absorption were evaluated at wavelengths from 300 to 1000 nm in 100 nm increments, using 220 terms in the Toon--Ackerman Mie-series calculation. Ice-nucleation spectra were evaluated at 100 temperatures from $-40^\circ\mathrm{C}$ to $0^\circ\mathrm{C}$.

\section{Proof of Linearity}
\label{sec:direct-proof-linearity}

Here we explicitly prove the linearity of AeroMELD encoders given by (\ref{eqn:ecoder-form-scale})--(\ref{eqn:encoder-form-shape}). To prove that $\Phi$ is linear, we first observe that $\pi_3 = \alpha \pi_1 + \beta \pi_2$ has encoding $\Phi(\pi_3) = (n_3, z_3)$ where $n_3 = \alpha n_1 + \beta n_2$ and
\begin{align}
    z_3
    &= \sum_{i=1}^{N_1} \frac{\alpha n_{1,i}}{n_3} \phi(\mu_{1,i}) + \sum_{i=1}^{N_2} \frac{\beta n_{2,i}}{n_3} \phi(\mu_{2,i}) \\
    &= \frac{\alpha n_1}{\alpha n_1 + \beta n_2} \sum_{i=1}^{N_1} \frac{n_{1,i}}{n_1} \phi(\mu_{1,i}) + \frac{\beta n_2}{\alpha n_1 + \beta n_2} \sum_{i=1}^{N_2} \frac{n_{2,i}}{n_2} \phi(\mu_{2,i}) \\
    &= \frac{\alpha n_1}{\alpha n_1 + \beta n_2} z_1 + \frac{\beta n_2}{\alpha n_1 + \beta n_2} z_2.
\end{align}
This means that
\begin{align}
    \Phi(\pi_3)
    &= \left(\alpha n_1 + \beta n_2, \frac{\alpha n_1}{\alpha n_1 + \beta n_2} z_1 + \frac{\beta n_2}{\alpha n_1 + \beta n_2} z_2\right) \\
    &= \alpha (n_1, z_1) + \beta (n_2, z_2) \\
    &= \alpha \Phi(\pi_1) + \beta \Phi(\pi_2).
\end{align}

\section{Double-Scale Linear Operation Derivation}
\label{sec:double-scale-derivation}

Recall that the latent representation in AeroMELD is given by the pair $(n, z)$, where $n$ is the total number concentration and $z$ is the shape component of the latent variable. An equivalent representation is given by the double-scale pair $(n, s)$ where $s = nz$ is the scaled shape component. The linear operation in the latent representation is given by
\begin{equation}
\label{eqn:appendix-latent-linear}
\alpha (n_1, z_1) + \beta (n_2, z_2)
= \bigl(\alpha n_1 + \beta n_2, \lambda z_1 + (1 - \lambda) z_2\bigr), \text{ where } \lambda = \frac{\alpha n_1}{\alpha n_1 + \beta n_2},
\end{equation}
while the linear operation in the double-scale representation is given by
\begin{equation}
\label{eqn:appendix-double-scale-linear}
\alpha (n_1, s_1) + \beta (n_2, s_2)
= (\alpha n_1 + \beta n_2, \alpha s_1 + \beta s_2).
\end{equation}
Here we prove that the operation (\ref{eqn:appendix-latent-linear}) is equivalent to the double-scale operation given in (\ref{eqn:appendix-double-scale-linear}). Recall that $s = nz$, so $z = s/n$. Starting from the right-hand side of (\ref{eqn:appendix-latent-linear}), we have
\begin{align}
	\bigl(\alpha n_1 + \beta n_2, \lambda z_1 + (1 - \lambda) z_2\bigr)
	&= \left(\alpha n_1 + \beta n_2, \frac{\alpha n_1}{\alpha n_1 + \beta n_2} z_1 + \frac{\beta n_2}{\alpha n_1 + \beta n_2} z_2\right) \\
	&= \left(\alpha n_1 + \beta n_2, \frac{\alpha n_1}{\alpha n_1 + \beta n_2} \frac{s_1}{n_1} + \frac{\beta n_2}{\alpha n_1 + \beta n_2} \frac{s_2}{n_2}\right) \\
	&= \left(\alpha n_1 + \beta n_2, \frac{\alpha s_1 + \beta s_2}{\alpha n_1 + \beta n_2} \right).
\end{align}
Converting back to the double-scale representation using $s = nz$, the second component becomes
\begin{equation}
	(\alpha n_1 + \beta n_2) \cdot \frac{\alpha s_1 + \beta s_2}{\alpha n_1 + \beta n_2} = \alpha s_1 + \beta s_2,
\end{equation}
which gives the result in (\ref{eqn:appendix-double-scale-linear}).

%\clearpage
%%%%%%%%%%%%%%%%%%%%%%%%%%%%%%%%%%%%%%%%%%%%%%%
% References
%%%%%%%%%%%%%%%%%%%%%%%%%%%%%%%%%%%%%%%%%%%%%%%

\end{document}